\documentclass{article}

\usepackage{graphicx}%
\usepackage{multirow}%
\usepackage{amsmath,amssymb,amsfonts}%
\usepackage{amsthm}%
\usepackage{mathrsfs}%
\usepackage[title]{appendix}%
\usepackage{xcolor}%
\usepackage{textcomp}%
\usepackage{manyfoot}%
\usepackage{booktabs}%
\usepackage{algorithm}%
\usepackage{algorithmicx}%
\usepackage{algpseudocode}%
\usepackage{listings}%
\usepackage{siunitx}
\usepackage{bbm}
\usepackage{authblk}
\usepackage{hyperref}
\usepackage[a4paper, total={6in, 8in}]{geometry}

\newtheorem{theorem}{Theorem}%
\newtheorem{proposition}[theorem]{Proposition}%
\newtheorem{remark}{Remark}%
\newtheorem{corollary}{Corollary}

\newcommand{\R}{\mathbb{R}}
\newcommand{\x}{\mathbf{x}}
\newcommand{\V}{\mathbf{V}}
\newcommand{\basisvec}{\mathbf{v}}
\newcommand{\opc}{\widehat{\mathbf{c}}}
\newcommand{\opA}{\widehat{\mathbf{A}}}
\newcommand{\opH}{\widehat{\mathbf{H}}}

\newcommand{\Datamatrix}{\mathbf{D}}
\newcommand{\Rhsmatrix}{\mathbf{R}}

\newcommand{\tr}{^{\top}}


\usepackage{acro}

\DeclareAcronym{dof}{
short = dof,
long = degrees of freedom,
tag = abbrev
}

\DeclareAcronym{eim}{
short = EIM,
long = empirical interpolation method,
tag = abbrev
}

\DeclareAcronym{fe}{
short = FE,
long = finite element,
tag = abbrev,
long-plural = s,
short-plural = s
}

\DeclareAcronym{fom}{
short = FOM,
long = full-order model,
tag = abbrev
}

\DeclareAcronym{geim}{
short = GEIM,
long = generalized empirical interpolation method,
tag = abbrev
}

\DeclareAcronym{ivp}{
short = IVP,
long = initial value problem,
tag = abbrev
}

\DeclareAcronym{mor}{
short = MOR,
long = model order reduction,
tag = abbrev
}

\DeclareAcronym{ode}{
short = ODE,
long = ordinary differential equation,
tag = abbrev
}

\DeclareAcronym{oed}{
short = OED,
long = optimal experimental design,
tag = abbrev
}

\DeclareAcronym{omp}{
short = OMP,
long = orthogonal matching pursuit,
tag = abbrev
}

\DeclareAcronym{opinf}{
short = OpInf,
long = operator inference,
tag = abbrev
}

\DeclareAcronym{pbdw}{
short = PBDW,
long = parameterized-background data-weak,
tag = abbrev
}

\DeclareAcronym{pde}{
short = PDE,
long = partial differential equation,
tag = abbrev,
long-plural = s,
short-plural = s
}

\DeclareAcronym{pg}{
short = PG,
long = Petrov Galerkin,
tag = abbrev
}

\DeclareAcronym{pod}{
short = POD,
long = proper orthogonal decomposition,
tag = abbrev
}

\DeclareAcronym{qoi}{
short = QoI,
long = quantity of interest,
tag = abbrev
}

\DeclareAcronym{rb}{
short = RB,
long = reduced basis,
tag = abbrev,
long-plural-form = reduced bases,
short-plural = ,
}

\DeclareAcronym{rom}{
short = ROM,
long = reduced-order model,
tag = abbrev
}

\DeclareAcronym{spd}{
short = s.p.d.,
long = symmetric positive definite,
tag = abbrev
}

\DeclareAcronym{svd}{
short = SVD,
long = singular value decomposition,
tag = abbrev
}

\DeclareAcronym{uq}{
short = UQ,
long = uncertainty quantification,
tag = abbrev
}

\DeclareAcronym{wlog}{
short = w.l.o.g.,
long = without loss of generality,
tag = abbrev
}

\begin{document}


\title{Nested Operator Inference for Adaptive Data-Driven Learning of Reduced-order Models}

\author[1]{Nicole Aretz}
\author[1]{Karen Willcox}

\affil[1]{Oden Institute for Computational Engineering and Sciences, University of Texas
at Austin, Austin, 78712, TX, United States}

\maketitle


\abstract{
This paper presents a data-driven, nested Operator Inference (OpInf) approach for learning physics-informed reduced-order models (ROMs) from snapshot data of high-dimensional dynamical systems.
The approach exploits the inherent hierarchy within the reduced space to iteratively construct initial guesses for the OpInf learning problem that prioritize the interactions of the dominant modes. 
The initial guess computed for any target reduced dimension corresponds to a ROM with provably smaller or equal snapshot reconstruction error than with standard OpInf.
Moreover, our nested OpInf algorithm can be warm-started from previously learned models, enabling versatile application scenarios involving dynamic basis and model form updates.
We demonstrate the performance of our algorithm on a cubic heat conduction problem, with nested OpInf achieving a four times smaller error than standard OpInf at a comparable offline time.
Further, we apply nested OpInf to a large-scale, parameterized model of the Greenland ice sheet where, despite model form approximation errors, it learns a ROM with, on average, \SI{3}{\%} error and computational speed-up factor above 19,000.
}


\section{Introduction}\label{sec:introduction}

We present a nested \ac{opinf} approach for learning and updating \acp{rom} of high-dimensional dynamical systems non-intrusively from snapshot data and structural knowledge of the governing equations.
By exploiting the intrinsic low-dimensionality of the system's solution manifold, \acp{rom} often achieve computational speed-ups of several orders of magnitude with limited loss in accuracy compared to the \ac{fom}.
Consequently, \acp{rom} enable real-time and long-term predictions as well as many-query applications such as optimization and forward and inverse uncertainty quantification.
We refer to the surveys \cite{benner2015survey, hesthaven2016certified, benner2017model} for further introductions into model reduction.
When the full-order operators cannot be accessed --- for example when working with commercial or legacy codes --- 
the classic construction of a \ac{rom} through Galerkin projection is not possible.
In this case, non-intrusive learning approaches are used instead to build fast surrogate models from available model information and data.
Examples include \ac{opinf} for learning projections onto linear (\cite{ghattas2021acta, kramer2024survey}) and non-linear solution manifolds (\cite{geelen2024learning, peherstorfer2022breaking}), dynamic mode decomposition (\cite{schmid2022dynamic}), sparse identification of nonlinear dynamics (\cite{brunton2016discovering}), neural operators (\cite{azizzadenesheli2024neural}), or general black-box machine learning approaches when little physical knowledge is available. 
In this paper, we focus on the \ac{opinf} method for learning Galerkin projections onto linear solution manifolds.

The \ac{opinf} method \cite{Peherstorfer2016c} learns a \ac{rom} non-intrusively from snapshot data by mimicking the structure of a Galerkin projection onto the governing equations. 
Under certain conditions, the \ac{opinf} learning problem recovers the reduced-order operators of the \ac{fom}'s intrusive Galerkin projection onto the chosen reduced space (\cite{Peherstorfer2016c,Peherstorfer2020a, aretz2024enforcing}, \cite{mcquarrie2023data}, Sec. 2.3.4).
These conditions can be met through the (computationally expensive) reprojection method (\cite{Peherstorfer2020a, Uy2021});
otherwise, the \ac{opinf} learning problem is typically embedded within an outer optimization of regularization hyper-parameters to encourage suitable stability radii (\cite{mcquarrie2021data, Sawant2021, gkimisis2025spatiallylocal}).
When specific model properties are known, these can be imposed on the learning problem; examples include Hamiltonian (\cite{sharma2022hamiltonian, gruber2023hamiltonian, geng2025data, gruber2025variational}) and Lagrangian (\cite{sharma2024lagrangian, sharma2024preserving}) structures, and preservation of energy (\cite{koike2024energy}), differential forms (\cite{filanova2023mechanical, geng2024gradient}) or equilibrium points (\cite{goyal2023guaranteed}).
However, these approaches are limited to specific classes of \acp{fom}.

With nested \ac{opinf}, we target a model property that is shared among all projection-based \acp{rom} with polynomial operators: The ability to obtain the projection of the \ac{rom} onto a smaller reduced space by taking submatrices within its matrix representation.
This property is important because, by construction, reduced spaces have an inherent hierarchy that determines the importance of each operator entry for the accuracy of the learned \ac{rom}.
Standard \ac{opinf} ignores the basis hierarchy by learning all operator entries together in a large learning problem that becomes poorly conditioned for large reduced spaces or high-order polynomial operators.
In contrast, in our nested \ac{opinf} approach \cite{aretz2024enforcing}, we exploit the basis hierarchy to obtain an optimal order for learning the reduced-order operator entries through \ac{opinf} least squares problems that are as small as possible.
However, \cite{aretz2024enforcing} relies on the reprojection method to negate exponential error propagation.
With this paper, we generalize nested \ac{opinf} beyond reprojection to keep the offline learning cost (after the selection of the reduced space) independent of the full-order dimension and robust against error propagation.

Our nested \ac{opinf} algorithm iteratively expands the reduced space and the learned reduced-order operators until the target dimension is reached.
Operators learned in previous iterations are imposed as initial guesses within the expanded learning problems as a form of nested, self-informed regularization.
We prove that this strategy yields a better initial guess for the \ac{rom} at the target reduced dimension than the Tikhonov regularization prevalent in practice, with equality only in a single, worst-case scenario.
The computational cost of the approach remains proportional to standard \ac{opinf} (with the reduced dimension as proportionality coefficient), and is, in particular, independent from the full-order dimension.
Moreover, we show that the numerical stability of the encountered least squares problems is best when learning the interactions of the most important modes in the basis hierarchy, and that --- when learning the interactions of the less important modes --- the nested algorithm is increasingly able to apply stronger regularization to the less stable least squares problems without sacrificing reconstruction accuracy.
We first present the algorithm in a minimal form that makes it easiest to implement, and then provide extensions that can further improve its performance in practice.
We illustrate the benefits of nested \ac{opinf} compared to standard \ac{opinf} on a cubic heat conduction example;
further, we show its robustness to model approximations in a large-scale model of the Greenland ice sheet.

This paper is organized as follows:
In Section \ref{sec:background}, we explain hierarchies within reduced spaces and how they are reflected in \ac{opinf} learning problems.
In Section \ref{sec:nested}, we introduce our nested \ac{opinf} algorithm and analyze it in comparison to standard \ac{opinf}.
We discuss possible extensions of the algorithm in Section \ref{sec:extensions}, and demonstrate it in Section \ref{sec:results} on two numerical examples.
Finally, we conclude in Section \ref{sec:conclusion}.


\section{Hierarchical reduced-order modelling}\label{sec:background}

Our goal is to build a \ac{rom} of a dynamical system from structural knowledge of the \ac{fom}'s governing equations
\begin{equation}\label{eq:fom}
    \begin{aligned}
    \dot{\x}(t) &= \mathbf{c} + \mathbf{A} \x(t) + \mathbf{H}[\x(t) \otimes \x(t)], \quad t > 0,\\
    \x(0) &= \x_0,
    \end{aligned}
\end{equation}
and snapshot data $\x(t_1), \dots, \x(t_K)$ of its solution $\x : [0, T] \rightarrow \R^n$ at time steps $0 = t_1 < t_2 < \dots < t_K \le T$.
The vector $\mathbf{c} \in \R^n$ encodes a time-invariant forcing function;
the matrix $\mathbf{A} \in \R^{n\times n}$ encodes the actions of linear operators in the original \ac{pde}; the matrix $\mathbf{H} \in \R^{n \times n^2}$ encodes operations on quadratic interactions of the state $\x$ with itself.
Note that the restriction to a time-invariant quadratic system of the form (\ref{eq:fom}) is only to simplify the exposition; our concepts extend to time-varying forcing functions, non-Euclidean inner products, and to parameterized, higher-order polynomial or non-polynomial operators through the techniques introduced in \cite{Peherstorfer2016c,qian2022reduced, McQuarrie2021c, aretz2024enforcing, Qian2020, benner2020operator}.

To facilitate projection-based model reduction, we use \ac{pod} on the snapshots $\x(t_1), \dots, \x(t_K)$ to identify an ordered sequence of orthonormal vectors $\basisvec_1, \dots, \basisvec_K \in \R^n$ such that, for all $1 \le s \le K$, the squared projection error of the snapshots onto $
\text{span}\{\basisvec_1, \dots, \basisvec_s\} \subset \R^n$ is the minimum among all $s$-dimensional subspaces of $\R^n$, i.e.,
\begin{align}\label{eq:POD:minimization}
    \sum_{k=1}^K \| \x(t_k) - \V_s \V_s\tr \x(t_k) \|^2 = 
    \min_{\substack{\mathbf{W} \in \mathbb{R}^{n \times s},\\ \mathbf{W}\tr \mathbf{W} = \mathbf{I}_s}} ~ \sum_{k=1}^K\| \x(t_k) - \mathbf{W} \mathbf{W}\tr \x(t_k) \|^2
    =: \varepsilon_s^2
\end{align}
holds for all $1 \le s \le K$ with
\begin{align}
    \V_s := [\basisvec_1, \dots, \basisvec_s] \in \mathbb{R}^{n \times s}, \quad 
    \V_s^{\top} \V_s = \mathbf{I}_s
\end{align}
and $\mathbf{I}_s$ the identity matrix of dimension $s \times s$.

We use the sequence $\varepsilon_1^2 \ge \varepsilon_2^2 \ge \dots \ge \varepsilon_K^2 = 0$ of residual errors to choose a (target) reduced dimension $r \le K$ for our \ac{rom}.
With $r$ fixed, our goal becomes to identify reduced-order operators $\widehat{\mathbf{c}}_r \in \R^r$, $\widehat{\mathbf{A}}_r \in \R^{r\times r}$, $\widehat{\mathbf{H}}_r \in \R^{r \times r^{(2)}}$ with $r^{(2)} := \frac{1}{2} r (r+1)$ such that $\x(t) \approx \V_r \widehat{\x}_r(t)$ for the solution $\widehat{\x}_r : [0, T] \rightarrow \R^r$ to the dynamical system
\begin{equation}\label{eq:rom}
    \begin{aligned}
    \dot{\widehat{\x}}_r(t) 
    &= \widehat{\mathbf{c}}_r + \widehat{\mathbf{A}}_r\widehat{\x}_r(t) + \widehat{\mathbf{H}}_r[\widehat{\x}_r(t) \, \widehat{\otimes} \ \widehat{\x}_r(t)], \quad t > 0 \\
    \widehat{\x}_r(0) &= \V_r\tr \x_0.
\end{aligned}
\end{equation}
Here, we are using the condensed Kronecker product $\widehat{\otimes}$ defined through
\begin{align}\label{eq:condensed-Kronecker}
    \mathbf{w} \widehat{\otimes} \mathbf{w} := [w_1^2, w_1w_2, w_2^2, w_1 w_3, \dots, w_r^2]\tr \in \R^{r^{(2)}}
\end{align}
for all $\mathbf{w} = [w_1, \dots, w_r]\tr \in \R^r$ (and analogously for other dimensions).
The reduced dimension $r$ is typically small (order of tens) compared to the full-order dimension $n$ (order of tens of thousands to millions), such that even with dense reduced-order operators $\widehat{\mathbf{c}}_r$, $\widehat{\mathbf{A}}_r$, $\widehat{\mathbf{H}}_r$,  Eq.~\eqref{eq:rom} can be solved fast, often with several orders of magnitude in speed-up.

With the \ac{opinf} method, the reduced-order operators $\opc_r, \opA_r, \opH_r$ are learned through the least squares problem
\begin{align}\label{eq:opinf:nonregularized}
    \min_{\widehat{\mathbf{O}}_r = [\widehat{\mathbf{c}}_r, \widehat{\mathbf{A}}_r, \widehat{\mathbf{H}}_r]}
    \| \Datamatrix_r \widehat{\mathbf{O}}_r\tr - \Rhsmatrix_r \|_F^2
\end{align}
with data matrix $\Datamatrix_r \in \R_r^{K \times r_{\rm{tot}}}$, $r_{\rm{tot}} := 1 + r + r^{(2)}$, and time-derivative matrix $\Rhsmatrix_r \in \R^{K \times r}$ defined as
\begin{align}\label{eq:opinf:matrices}
    \Datamatrix_r &:= \left[
    \begin{array}{ccc}
        ~1,~ & ~\mathbf{p}_r(t_1)\tr,~ & ~[\mathbf{p}_r(t_1) \widehat{\otimes}\mathbf{p}_r(t_1)]\tr~ \\
        \vdots & \vdots & \vdots~~ \\
        1, & \mathbf{p}_r(t_K)\tr, & ~[\mathbf{p}_r(t_K) \widehat{\otimes}\mathbf{p}_r(t_1)]\tr \\
    \end{array}
    \right],
    &\Rhsmatrix_r &:= \left[ 
    \begin{array}{c}
        \dot{\mathbf{p}}_r(t_1)\tr \\
        \vdots \\
        \dot{\mathbf{p}}_r(t_K)\tr
    \end{array}
    \right]
\end{align}
using the projection $\mathbf{p}_r(t_k) := \V_r\tr \x(t_k)$ of each snapshot $\x(t_k)$, $1\le k \le K$.
The derivative $\dot{\mathbf{p}}_r = \V_r\tr \dot{\x}$ is typically approximated using finite differences, albeit other techniques are also possible (c.f., \cite{Peherstorfer2020a, Uy2021}).

The motivating advantage for using the \ac{opinf} method is that it is non-intrusive: 
The reduced-order operators $\opc_r, \opA_r, \opH_r$ are obtained without accessing the full-order operators $\mathbf{c}, \mathbf{A}, \mathbf{H}$.
This, in turn, enables reduced-order modelling for commercial or legacy codes.
The downside is, in turn, that the quality of the learned \ac{rom} depends heavily on the amount of available training data:
The minimization \eqref{eq:opinf:nonregularized} has a unique solution if and only if $\Datamatrix_r$ has full column rank, which requires $K \ge r_{\rm{tot}}$ snapshots.
Even then, any perturbations in the time-derivative matrix $\Rhsmatrix$ are scaled by the inverse $\sigma_{\min}(\Datamatrix_r)^{-1}$ of the smallest singular value of $\Datamatrix_r$, increases in $r$ (c.f., \cite{aretz2024enforcing} or Corollary \ref{thm:opinf:conditioning} in Appendix \ref{sec:proofs}).
Therefore, choosing a larger reduced dimension $r$ for the purpose of improving how well $\x$ can possibly be approximated in $\V_r$ destabilizes the \ac{opinf} learning problem.
Even when regularization is applied for stability, the \ac{rom} learned for the target reduced dimension $r$ may indeed perform worse than one learned for a smaller dimension $s < r$.

Learning a reduced-order approximation $\x(t) \approx \V_s \widehat{\x}_s(t)$ where $\widehat{\x}_s : [0, T] \rightarrow \R^s$ solves
\begin{equation}\label{eq:rom:s}
    \begin{aligned}
    \dot{\widehat{\x}}_s(t) 
    &= \widehat{\mathbf{c}}_s + \widehat{\mathbf{A}}_s\widehat{\x}_s(t) + \widehat{\mathbf{H}}_s[\widehat{\x}_s(t) \widehat{\otimes} \widehat{\x}_s(t)], \quad t > 0 \\
    \widehat{\x}_s(0) &= \V_s \tr \x_0
\end{aligned}
\end{equation}
with a smaller reduced dimension $s < r$ has the advantage that the data matrix $\Datamatrix_s$ (defined analogously to \eqref{eq:opinf:matrices}) is better conditioned than $\Datamatrix_r$.
Moreover, because the sequence $\basisvec_1, \dots, \basisvec_s, \dots, \basisvec_r$ were chosen with \ac{pod}, the reduced basis $\V_s$ neglects exactly those basis vectors $\basisvec_{s+1}, \dots, \basisvec_r$ in $\V_r$ that are the least important for the snapshot approximation accuracy in \eqref{eq:POD:minimization}:
While $\varepsilon_s^2 \le \varepsilon_{s+1}^2 \le \varepsilon_{s+2}^2$ for any $s \ge 1$, we also have that $\varepsilon_{s+1}^2 - \varepsilon_s^2 \ge \varepsilon_{s+2}^2 - \varepsilon_{s+1}^2 \ge 0$. That is, 
while the total approximation accuracy of the snapshots improves when the reduced space $\V_s$ is expanded with the next \ac{pod} basis vectors $\basisvec_{s+1}, \basisvec_{s+2}$, the incremental accuracy gain diminishes.
Improving the recovery of trajectories $\mathbf{p}_s(t) = \V_s\tr \x(t)$ for the smallest reduced dimensions $s \ll r$ associated to the most important modes $\V_s$ can therefore have a stronger effect on the approximation accuracy of a learned \ac{rom} than increasing the reduced dimension itself.

To fully exploit the basis hierarchy while still learning a \ac{rom} for the target dimension $r$, we propose to iteratively learn operator updates $\Delta^c_{s} \in \R^{s}, \Delta^A_{s} \in \R^{s \times s}, \Delta^H_{s} \in \R^{s \times s^{(2)}}$ to iteratively expand
\begin{equation}
\begin{aligned}\label{eq:updates:s}
    \widehat{\mathbf{c}}_{s} &= \left(
    \begin{array}{c}
        \widehat{\mathbf{c}}_{s-1} \\
        0
    \end{array}\right) + \Delta^c_{s} =: \opc_{s}^{(0)} + \Delta^c_{s}, \\
    \widehat{\mathbf{A}}_{s} &= \left(
    \begin{array}{cc}
        \widehat{\mathbf{A}}_{s-1} & \mathbf{0}_{{s-1} \times 1}\\
        \mathbf{0}_{1 \times {s-1}} & 0
    \end{array}\right) + \Delta^A_{s} =: \opA_{s}^{(0)} + \Delta^A_{s}, \\
    \widehat{\mathbf{H}}_{s} &= \left(
    \begin{array}{cc}
        \widehat{\mathbf{H}}_{s-1} & \mathbf{0}_{{s-1} \times {s-1}}\\
        \mathbf{0}_{1 \times ({s-1})^{(2)}} & \mathbf{0}_{1 \times {s-1}}
    \end{array}\right) + \Delta^H_{s} =: \opH_{s}^{(0)} + \Delta^H_{s}
\end{aligned}
\end{equation}
from $s=2$ until $s=r$, starting from operators $\opc_1, \opA_1, \opH_1$ learned for $\V_1$.
In \eqref{eq:updates:s}, we have defined the reduced-order operators $\opc_{s}^{(0)} \in \R^s$, $\opA_{s}^{(0)} \in \R^{s \times s}$, $\opH_{s}^{(0)} \in \R^{s \times s}$ as the trivial extension of the operators $\opc_{s-1}, \opA_{s-1}, \opH_{s-1}$ from the smaller reduced space $\V_{s-1}$ into $\V_s = [\V_{s-1}, \basisvec_s]$ such that any entry associated to the additional basis vector $\basisvec_s$ is set to zero.
We use $\opc_s^{(0)}, \opA_s^{(0)}, \opH_s^{(0)}$ as initial guesses within our nested \ac{opinf} algorithm, and their corresponding \ac{rom} solution $\widehat{\x}_s^{(0)} : [0, T] \rightarrow \R^s$ with
\begin{equation}\label{eq:rom:s0}
    \begin{aligned}
    \dot{\widehat{\x}}_s^{(0)}(t) 
    &= \widehat{\mathbf{c}}_s^{(0)} + \widehat{\mathbf{A}}_s^{(0)}\widehat{\x}_s^{(0)}(t) + \widehat{\mathbf{H}}_s^{(0)}[\widehat{\x}_s^{(0)}(t) \widehat{\otimes} \widehat{\x}_s^{(0)}(t)], \quad t > 0 \\
    \widehat{\x}_s^{(0)}(0) &= \V_s \tr \x_0
\end{aligned}
\end{equation}
as a reference trajectory to guarantee that any learned updates $\Delta_s^c, \Delta_s^A, \Delta_s^H$ indeed improve upon the \ac{rom} learned for $\V_{s-1}$.
Through our nested expansion procedure from $\opc_{2} = \opc_{1}^{(0)} + \Delta_2^c$ to $\opc_{r} = \opc_{r-1}^{(0)} + \Delta_{r}^c$ (and simultaneously for $\opA_r$ and $\opH_r$),
the entries associated to the most important basis vectors $\basisvec_1, \basisvec_2, \dots$ are learned first and in stable and small learning problems.
These entries are then used to inform the later updates $\Delta_s^c, \Delta_s^H, \Delta_s^H$, $1 \ll s \le r$, for the less important modes.
The final \ac{rom} \eqref{eq:rom} learned for the target dimension $r$ is then imbued with all \acp{rom} on the subspaces $\V_s$, $s=1, \dots, r-1$.


\section{Nested Operator Inference}\label{sec:nested}

In this section, we present a general, nested \ac{opinf} learning algorithm that can be implemented easily, especially if starting from an existing \ac{opinf} code.
We discuss computational cost and analyze the change in \ac{rom} accuracy over the course of the nested expansion process.

\subsection{Algorithm Description}\label{sec:algorithm}

\begin{algorithm}
\caption{Nested Operator Inference}\label{alg:flower}
\begin{algorithmic}[1]
\smallskip
\State \textbf{Input:}
reduced dimension $r \in \mathbb{N}$ of (target) reduced space $\V_r \in \R^{n \times r}$, 
$K \in \mathbb{N}$ projected snapshots matrix $\mathbf{P}_r = [\V_r\tr \x(t_1), \dots, \V_r\tr \x(t_K)]\tr \in \R^{K \times r}$,
$n_{\omega} \in \mathbb{N}$ combinations $\{\omega^{(i)} = (\omega_{c}^{(i)}, \omega_{A}^{(i)}, \omega_{H}^{(i)})\}_{i=1}^{n_{\omega}}$ of positive training weights 
\State \textbf{Optional inputs:}
start dimension $1 \le r_{0} \le r$ (default $1$),
initial guesses $\opc_{r_{0}}^{(0)} \in \R^{r_{0}}$, $\opA_{r_{0}}^{(0)} \in \R^{r_{0} \times r_{0}}$, $\opH_{r_{0}}^{(0)} \in \R^{r_{0} \times r_{0}^{(2)}}$ (default $\mathbf{0}$ in given dimension), relative error threshold $\bar{\delta} \ge 0$ (default 0)
\State \textbf{Output:} Reduced operators $\opc_{r} \in \R^{r}$,
$\opA_{r} \in \R^{r \times r}$,
$\opH_{r} \in \R^{r \times r^{(2)}}$
\Statex \hrulefill
\smallskip
\Statex \texttt{\# Preparation}
\State Use finite differences to approximate $\mathbf{R} \gets \dot{\mathbf{P}} \in \R^{K \times r}$
\smallskip
\Statex \texttt{\# Iterate over reduced dimension}
\For{$s = r_0, \dots, r$}
\smallskip
\Statex \quad \texttt{\# Setup \ac{opinf} matrices}
\State $\mathbf{P}_s \gets [\text{col}_1(\mathbf{P}_r), \dots, \text{col}_s(\mathbf{P}_r)] \in \R^{K \times s}$ (restriction onto first $s$ columns)
\State $\Datamatrix_{s} \gets [\mathbf{1}_{K \times 1}, \mathbf{P}_s, [\mathbf{P}_s \odot \mathbf{P}_s]\tr \mathbf{C}_s] \in \R^{K \times s_{\rm{tot}}}$ (data matrix for reduced space $\V_s$)
\State $\Rhsmatrix_s \gets [\text{col}_1(\Rhsmatrix), \dots, \text{col}_s(\Rhsmatrix)] \in \R^{K \times s}$ (restriction onto first $s$ columns)
\smallskip
\Statex \quad \texttt{\# Expand operators as new initial guesses}
\If{$s \neq r_0$}
\State $\opc_{s}^{(0)} \gets \left[\begin{array}{c}
    \opc_{s-1} \\
    0
\end{array}\right] \in \R^s$, 
$\opA_{s}^{(0)} \gets \left[\begin{array}{cc}
    \opA_{s-1} & \mathbf{0} \\
    \mathbf{0} & 0
\end{array}\right] \in \R^{s \times s}$ \label{alg:flower:def:c0A0}
\State
$\opH_{s}^{(0)} \gets \left[\begin{array}{cc}
    \opH_{s-1} & \mathbf{0} \\
    \mathbf{0} & \mathbf{0}
\end{array}\right] \in \R^{s \times s^{(2)}}$\label{alg:flower:def:H0}
\EndIf
\smallskip
\Statex \quad \texttt{\# Evaluate reference error}
\State Solve $\dot{\widehat{\x}} = \opc_{s}^{(0)}+ \opA_{s}^{(0)}\widehat{\x} +\opH_{s}^{(0)} [\widehat{\x} \widehat{\otimes} \widehat{\x}]$ ($0 < t \le t_K$), $\widehat{\x}(0) = \text{row}_1(\mathbf{P}_s)\tr$
\State $\delta_s^{(0)} \gets \|\mathbf{P}_s\tr - [\widehat{\x}(t_1), \dots, \widehat{\x}(t_K)]\|_F^2$, $\sigma_s^{(0)} \gets \infty$
\smallskip
\Statex \quad \texttt{\# Evaluate ROM error for different regularization weights}
\For{$i = 1, \dots, n_{\omega}$}
\smallskip
\State $\mathbf{w} \gets [\omega_{c}^{(i)}, \omega_{A}^{(i)} \mathbf{1}_{1 \times s}, \omega_{H}^{(i)} \mathbf{1}_{1 \times s^{(2)}}] \in \R^{s \times s_{\rm{tot}}}$ (weight vector) \label{alg:flower:def:w}
\State $\opc_{s}^{(i)}, \opA_{s}^{(i)}, \opH_{s}^{(i)}, \sigma_s^{(i)} \gets \texttt{OpInf}(s, \Datamatrix_s, \Rhsmatrix_s, \mathbf{w}, \opc_{s}^{(0)}, \opA_{s}^{(0)}, \opH_{s}^{(0)})$ \label{alg:flower:def:learn}
\State Solve $\dot{\widehat{\x}} = \opc_{s}^{(i)} + \opA_{s}^{(i)}\widehat{\x} + \opH_{s}^{(i)} [\widehat{\x} \widehat{\otimes} \widehat{\x}] $ ($0 < t \le t_K$), $\widehat{\x}(0) = \text{row}_1(\mathbf{P}_s)\tr$
\State Compute error $\delta_s^{(i)} \gets \|\mathbf{P}_s\tr - [\widehat{\x}(t_1), \dots, \widehat{\x}(t_K)]\|_F^2$
\smallskip
\EndFor
\smallskip
\Statex \quad \texttt{\# Choose regularization}
\State $i^* \gets \text{arg} \max_{0 \le i \le n_{\omega}} \sigma_s^{(i)}$ s.t. $\delta_i \le (1 + \bar{\delta}) \min_{0 \le j \le n_{\omega}} \delta_j$ \label{alg:flower:def:istar}
\State $\opc_{s} \gets \opc_{s}^{(i^*)}, \opA_{s} \gets \opA_{s}^{(i^*)}, \opH_{s} \gets \opH_{s}^{(i^*)}$, $\delta_s^* \gets \delta_s^{(i^*)}$, $\sigma_s^* \gets \sigma_s^{(i^*)}$
\smallskip
\EndFor \\
\smallskip
\Return $\opc_{r}, \opA_{r}, \opH_{r}$
\smallskip
\end{algorithmic}
\end{algorithm}

Algorithm \ref{alg:flower} is a nested \ac{opinf} learning procedure that iteratively learns 
\acp{rom} \eqref{eq:rom:s} for the reduced space $\V_s$
starting from $s = 1$, then $s=2$, until $s=r$ reaches the target dimension $r$.
In iteration $s$, the operators $\opc_s, \opA_s, \opH_s$ are learned by solving the regularized \ac{opinf} learning problem
\begin{equation}\label{eq:opinf:regularized:initialguess}
    \begin{aligned}
    \min_{\opc_s, \opA_s, \opH_s} &\|\Datamatrix_s \left[\begin{array}{c}
        \opc_s\tr\\ 
        \opA_s\tr\\ 
        \opH_s\tr
    \end{array}\right] - \Rhsmatrix_s \|_F^2 \\
    &+ \omega_c^2\| \opc_s-\opc_s^{(0)}\|_F^2 + \omega_A^2\|\opA_s-\opA_s^{(0)}\|_F^2 + \omega_H^2\|\opH_s-\opH_s^{(0)}\|_F^2.
\end{aligned}
\end{equation}
The regularization penalizes deviations from $\opc_{s-1}, \opA_{s-1}, \opc_{s-1}$ for all entries that encode interactions within the reduced space $\V_{s-1}$.
This is equivalent to learning updates $\Delta^c_{s}, \Delta^A_{s}, \Delta^H_{s}$ to $\opc_s^{(0)}, \opA_s^{(0)}, \opH_s^{(0)}$ in the form of \eqref{eq:updates:s} that are regularized towards zero.
Within Algorithm \ref{alg:flower}, the learning problem \eqref{eq:opinf:regularized:initialguess} is solved in the call \texttt{OpInf}; a reference implementation is provided in Appendix \ref{sec:referencecode}.

The regularization weights $\omega_c, \omega_A, \omega_H \ge 0$ in \eqref{eq:opinf:regularized:initialguess} are 
selected anew in each iteration.
To this end, Algorithm \ref{alg:flower} loops over a set of $n_{\omega} \in \mathbb{N}$ candidate combinations $\{(\omega_c^{(i)}, \omega_A^{(i)}, \omega_H^{(i)})\}_{i=1}^{n_{\omega}}$ to learn $n_{\omega}$ operators $\opc_s^{(i)}, \opA_s^{(i)}, \opH_s^{(i)}$.
For each triple of candidate operators including $(\opc_s^{(0)}, \opA_s^{(0)}, \opH_s^{(0)})$, i.e.,  for $i=0, \dots, n_{\omega}$, Algorithm \ref{alg:flower} solves the corresponding \ac{rom} \eqref{eq:rom:s} and evaluates the reconstruction error
\begin{align}
    \delta_s^{(i)} := \sum_{k=1}^K \|\mathbf{p}_s(t_k) - \widehat{\x}_s(t_k)\|_2^2 
    ~\text{ s.t. $\widehat{\x}$ solves \eqref{eq:rom:s} with operators } \opc_s^{(i)}, \opA_s^{(i)}, \opH_s^{(i)} 
\end{align}
of the projected snapshots $\mathbf{p}_s(t_k) := \V_s\tr \x(t_k)$, $1 \le k \le K$.
Under the default setting $\overline{\delta} = 0$, Algorithm \ref{alg:flower} chooses the combination $i^*$ of regularization weights $(\omega_c^{(i^*)}, \omega_A^{(i^*)}, \omega_H^{(i^*)})$ with minimal reconstruction error $\delta^{i^*}_s = \delta^{*}_s := \min_{0 \le i \le n_{\omega}} \delta^{(i)}$, using the smallest singular value $\sigma^{(i^*)}_s$ of the corresponding (regularized) data matrix as a tie breaker.
This optimization specifically includes the \ac{rom} with the reference operators $\opc_s^{(0)}, \opA_s^{(0)}, \opH_s^{(0)}$ to guarantee that the reconstruction error $\delta_s^*$ is in the worst case still close to the optimized error $\delta_{s-1}^*$ from the previous iteration (c.f. Section \ref{sec:analysis:expansionerror}).

With its optional input parameter $\overline{\delta}$, Algorithm \ref{alg:flower} permits a change to the regularization weight optimization from $i^* \in \text{arg} \min_{0 \le i \le n_{\omega}} \delta^{(i)}_s$ for $\overline{\delta}=0$ to
\begin{align}\label{eq:weights:sigma}
    i^* \in \text{arg} \min_{0 \le i \le n_{\omega}} \sigma_s^{(i)} \text{ s.t. } \delta_s^{(i)} \le (1 + \overline{\delta}) \min_{0 \le j \le n_{\omega}} \delta_s^{(j)},
\end{align}
where $\sigma_s^{(i)}$ is the smallest singular value of the regularized least squares problem \eqref{eq:opinf:regularized:initialguess} with weights $\omega_c^{(i)}, \omega_A^{(i)}, \omega_H^{(i)}$.
Rather than using the smallest singular value as a tiebreaker between equal minimal reconstruction errors as in the case $\overline{\delta} = 0$, the choice $\overline{\delta} > 0$ thus permits choosing a non-minimal $\delta_s^{i^*}$ in exchange for increased stability of the least-squares learning problem.
This avoids overfitting (see analysis in Section \ref{sec:analysis:stability}).
Moreover, because the larger singular values are induced by stronger regularization towards the reference operators $\opc_s^{(0)}, \opA_s^{(0)}, \opH_s^{(0)}$, the choice $\overline{\delta} > 0$ indirectly rewards similarity to \acp{rom} from previous iterations.

\begin{remark}\label{rmk:minerrorreduction}
    We set $\sigma_s^{(0)} = \infty$ because enforcing the initial guesses $\opc_s^{(0)}, \opA_s^{(0)}, \opH_s^{(0)}$ is equivalent to solving \eqref{eq:opinf:regularized:initialguess} with weights $\omega_c = \omega_A = \omega_H = \infty$.
    As a consequence, if $\delta_s^{(0)} \le (1 + \overline{\delta}) \min_{0 \le j \le n_{\omega}} \delta_s^{(j)}$, then the maximization over the smallest singular values guarantees that $\delta_s^* = \delta_s^{(0)}$; otherwise, 
    \begin{align}
        \delta_s^{*} &\le (1 + \overline{\delta}) \min_{0 \le j \le n_{\omega}} \delta_s^{(j)} < \delta_s^{(0)}.
    \end{align}
    Consequently, we only accept updates to the initial guesses $\opc_s^{(0)}, \opA_s^{(0)}, \opH_s^{(0)}$ if the \ac{rom}'s error is reduced by at least a factor $(1 + \overline{\delta})$, and we reject any updates that do not yield this improvement.
\end{remark}

As another set of optional inputs, Algorithm \ref{alg:flower} permits starting the nested construction of $\opc_r, \opA_r, \opH_r$ from an arbitrary dimension $r_0 \le r$ with user-defined initial guesses $\opc_{r_0}^{(0)}, \opA_{r_0}^{(0)}, \opH_{r_0}^{(0)}$.
This facilitates several use cases of the algorithm: 
Updates to hierarchical approximation of the \ac{fom} model structure (e.g., Taylor series), an iterative expansion of the reduced space from $\V_{r_0}$, $r_0 < r$, to $\V_r$ during training time, or warm-starting.
For example, warm-starting may be used in the following way: Suppose a first \ac{rom} was built using regularized \ac{opinf} for the target dimension $r$, but it fails to meet qualitative needs (e.g., it exhibits nonphysical behavior).
A second \ac{rom} is built for a smaller dimension $r_0$ to isolate whether the issue was introduced by the large dimension $r$ and consequent ill-posedness of the data matrix $\Datamatrix_r$, or if there might be another issue.
If the second \ac{rom} for dimension $r_0<r$ meets the qualitative requirements (as far as can be expected for the coarser accuracy $\varepsilon_{r_0} > \varepsilon_r$), then it can be used as an initial guess for Algorithm \ref{alg:flower} to add in the additional details of higher-order modes $\basisvec_{r_0+1}, \dots, \basisvec_{r}$, and the regularization weights can be tuned to find an appropriate balance.

\subsection{Computational Cost}

The majority of the computational cost in iteration $s$ of Algorithm \ref{alg:flower} is caused by the $n_{\omega}$ least squares solves and the $n_{\omega} + 1$ \ac{rom} evaluations.
After accounting for the regularization, a least squares solve via a singular value decomposition scales as $\mathcal{O}((K + s_{\rm{tot}})s_{\rm{tot}}^2) = \mathcal{O}(Ks^4 + s^6)$.
The cost of solving the \ac{rom} \eqref{eq:rom:s} depends on the chosen time-stepping scheme; for an explicit method it scales as $\mathcal{O}(Kss_{\rm{tot}}) = \mathcal{O}(Ks^3)$.
Iteration $s$ hence scales as $\mathcal{O}(n_{\omega}s^4(K + s^2))$.
Since $s \le r$ and there are at most $r$ iterations, the total cost of Algorithm \ref{alg:flower} is thus $\mathcal{O}(n_{\omega}r^5(K + r^2))$, independent of the \ac{fom} dimension $n \gg r$.

Compared to regularized \ac{opinf} with the same least squares solver and optimization over the same weights, the cost of Algorithm \ref{alg:flower} is larger by factor $r$.
However, because $r$ is usually small (order of tens), this additional cost remains insignificant compared to the computation of the snapshots.
Moreover, similar to the standard \ac{opinf} approach, the runtime of Algorithm \ref{alg:flower} can be improved by parallelizing the loop over the weights and using adaptive singular value decompositions to deal with different weights for the same data matrix $\Datamatrix_s$.
In our experience Algorithm \ref{alg:flower} is much less sensitive to the regularization parameters, especially in its later iterations, such that one can choose a smaller $n_{\omega}$ compared to the standard \ac{opinf} optimization or adjust the candidate weights in the course of the algorithm (smaller weights in the first iterations, larger weights in the later ones).

\subsection{Analysis of the expansion error}\label{sec:analysis:expansionerror}

The expansion with zeros of the optimized operators $\widehat{\mathbf{c}}_{s-1}, \widehat{\mathbf{A}}_{s-1}, \widehat{\mathbf{H}}_{s-1}$ from iteration $s-1$ to $\widehat{\mathbf{c}}_{s}^{(0)}, \widehat{\mathbf{A}}_{s}^{(0)}, \widehat{\mathbf{H}}_{s}^{(0)}$ as initial guesses in iteration $s$ is a central element of Algorithm \ref{alg:flower}.
The following proposition quantifies the effect this expansion has on the approximation accuracy of the corresponding \ac{rom}.

\begin{proposition}\label{thm:expansionerror}
    Let $\x : [0, T] \rightarrow \R^n$ be the solution to the \ac{fom} (\ref{eq:fom}).
    For any $1 < s \le r$, let 
    $\widehat{\mathbf{c}}_{s-1}, \widehat{\mathbf{A}}_{s-1}, \widehat{\mathbf{H}}_{s-1}$
    be the reduced operators obtained at the end of iteration $s-1$ of Algorithm \ref{alg:flower}, and let $\widehat{\x}_{s-1} : [0, T] \rightarrow \R^{s-1}$ be the corresponding solution to \eqref{eq:rom:s} with reduced dimension $s-1$.
    Let $\opc_s^{(0)} \in \R^{s}$, $\opA_s^{(0)} \in \R^{s \times s}$, $\opH_s^{(0)} \in \R^{s \times s^{(2)}}$ be the expansion of $\widehat{\mathbf{c}}_{s-1}$, $\opA_{s-1}$, $\opH_{s-1}$ as defined in \eqref{eq:updates:s} and lines \ref{alg:flower:def:c0A0}, \ref{alg:flower:def:H0} of Algorithm \ref{alg:flower}.
    Let $\widehat{\x}_{s}^{(0)} : [0, T] \rightarrow \R^{s}$ be the corresponding reduced-order solution to \eqref{eq:rom:s0}.
    Then, for all $t \in [0, T]$,
    \begin{align}\label{eq:expansionerror}
        \|\V_{s} \widehat{\x}_{s}^{(0)}(t) - \x(t)\|_2^2 = \|\V_{s-1} \widehat{\x}_{s-1}(t) - \x(t)\|_2^2 + \zeta_s(t)
    \end{align}
    with $\zeta_s(t) := (\basisvec_{s}\tr \x_0)(\basisvec_{s}\tr \x_0 - 2 \basisvec_{s}\tr \x(t))$.
\end{proposition}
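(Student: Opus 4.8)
The plan is to obtain an explicit closed form for $\widehat{\x}_s^{(0)}$ in terms of $\widehat{\x}_{s-1}$, and then substitute it into the left-hand side of \eqref{eq:expansionerror} and expand the squared norm. The central observation is that the zero-expanded operators $\opc_s^{(0)}, \opA_s^{(0)}, \opH_s^{(0)}$ from lines \ref{alg:flower:def:c0A0}--\ref{alg:flower:def:H0} carry a block structure (with zeros in the rows and columns associated to the new component) that causes the reduced dynamics \eqref{eq:rom:s0} to decouple into the $(s-1)$-dimensional system governing the first $s-1$ components and a scalar equation for the $s$-th.

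First I would split $\widehat{\x}_s^{(0)}(t) = [\mathbf{y}(t)\tr, z(t)]\tr$ with $\mathbf{y}(t) \in \R^{s-1}$ and $z(t) \in \R$, and inspect each term of \eqref{eq:rom:s0}. The constant and linear contributions are immediate from the block forms of $\opc_s^{(0)}$ and $\opA_s^{(0)}$. The delicate term is the quadratic one: I would invoke the ordering of the condensed Kronecker product \eqref{eq:condensed-Kronecker} to argue that the first $(s-1)^{(2)}$ entries of $\widehat{\x}_s^{(0)} \widehat{\otimes} \widehat{\x}_s^{(0)}$ are exactly $\mathbf{y} \widehat{\otimes} \mathbf{y}$, the remaining $s$ entries being the products $w_1 z, \dots, w_{s-1}z, z^2$ that involve the new component. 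Combined with the zero pattern of $\opH_s^{(0)}$, this shows that the first $s-1$ rows of $\opH_s^{(0)}[\widehat{\x}_s^{(0)} \widehat{\otimes}\widehat{\x}_s^{(0)}]$ reduce to $\opH_{s-1}[\mathbf{y} \widehat{\otimes} \mathbf{y}]$, while the $s$-th row vanishes. Consequently $\mathbf{y}$ solves the exact $(s-1)$-dimensional ROM \eqref{eq:rom:s}, and since the first $s-1$ entries of the shared initial condition $\V_s\tr \x_0$ coincide with $\V_{s-1}\tr\x_0$, uniqueness of the initial value problem gives $\mathbf{y}(t) = \widehat{\x}_{s-1}(t)$. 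Meanwhile $\dot z \equiv 0$ with $z(0) = \basisvec_s\tr \x_0$, so $z(t) \equiv \basisvec_s\tr \x_0$, yielding the closed form $\V_s \widehat{\x}_s^{(0)}(t) = \V_{s-1}\widehat{\x}_{s-1}(t) + (\basisvec_s\tr\x_0)\basisvec_s$.

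With this in hand, the remainder is a direct expansion. Writing $\mathbf{e}(t) := \V_{s-1}\widehat{\x}_{s-1}(t) - \x(t)$ and $\alpha := \basisvec_s\tr\x_0$, I would expand $\|\mathbf{e}(t) + \alpha\basisvec_s\|_2^2 = \|\mathbf{e}(t)\|_2^2 + 2\alpha\,\basisvec_s\tr\mathbf{e}(t) + \alpha^2$, using $\|\basisvec_s\|_2 = 1$. The cross term simplifies via the orthonormality of the POD basis: since $\basisvec_s\tr\V_{s-1} = \mathbf{0}$, one has $\basisvec_s\tr\mathbf{e}(t) = -\basisvec_s\tr\x(t)$, so the right-hand side collapses to $\|\mathbf{e}(t)\|_2^2 + \alpha^2 - 2\alpha\,\basisvec_s\tr\x(t)$, which is precisely $\|\V_{s-1}\widehat{\x}_{s-1}(t) - \x(t)\|_2^2 + \zeta_s(t)$.

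I expect the main obstacle to be the quadratic-term decoupling, i.e. making rigorous that the block structure of $\opH_s^{(0)}$ together with the entry ordering of $\widehat{\otimes}$ truly annihilates all coupling between $\mathbf{y}$ and $z$ in both directions; everything after the closed form is routine. The only facts I rely on are uniqueness of the reduced initial value problem, the orthonormality relations $\V_s\tr\V_s = \mathbf{I}_s$ (hence $\basisvec_s\tr\V_{s-1} = \mathbf{0}$ and $\|\basisvec_s\|_2 = 1$), and the compatibility of the initial conditions across dimensions.
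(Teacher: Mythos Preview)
Your proposal is correct and follows essentially the same approach as the paper: both arguments rest on the observation that the zero-padded operators force $\widehat{\x}_s^{(0)}(t) = [\widehat{\x}_{s-1}(t)^\top,\, \basisvec_s^\top\x_0]^\top$, after which the identity is elementary algebra with the orthonormality of $\V_s$. Your treatment of the decoupling (in particular the quadratic term via the ordering of $\widehat{\otimes}$) is more explicit than the paper's, and your final norm expansion is slightly more direct---you expand $\|\mathbf{e}(t)+\alpha\basisvec_s\|_2^2$ in place, whereas the paper first splits $\x(t)$ orthogonally along $\V_{s-1}$, $\basisvec_s$, and $\V_s^\perp$ and applies Pythagoras to each error separately---but these are equivalent bookkeeping choices rather than genuinely different routes.
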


\begin{proof}
We first define the auxiliary variable $\alpha(t) := \basisvec_{s}\tr \x(t) \in \R$ as the projection coefficient of the \ac{fom} solution $\x(t)$ onto the newly added basis vector $\basisvec_{s}$.
Recalling the definition $\mathbf{p}_{s-1}(t) := \V_{s-1}\tr \x(t)$, we write $\x(t)$ into the form
\begin{align}
    \x(t) = \V_{s-1} \mathbf{p}_{s-1}(t) + \basisvec_{s} \alpha(t)+ \mathbf{q}(t)
\end{align}
where $\mathbf{q}(t) := \x(t) - \V_{s-1} \mathbf{p}_{s-1}(t) - \basisvec_{s} \alpha(t) \perp \V_{s} = [\V_{s-1}, \basisvec_{s}]$.
This lets us express the error
\begin{align*}
    \|\V_{s-1} \widehat{\x}_{s-1}(t) - \x(t)\|_2^2 = \|\widehat{\x}_{s-1}(t) - \mathbf{p}_{s-1}(t)\|_2^2 + \alpha(t)^2 + \|\mathbf{q}(t)\|_2^2.
\end{align*}
For the \ac{rom} solution $\widehat{\x}_{s}^{(0)}(t) \in \R^{s}$, the structure of the expanded operators $\widehat{\mathbf{c}}_{s}^{(0)}, \widehat{\mathbf{A}}_{s}^{(0)}, \widehat{\mathbf{H}}_{s}^{(0)}$ implies
\begin{align*}
    \widehat{\x}_{s}(t) = [\widehat{\x}_{s-1}(t)\tr, \basisvec_{s}\tr \x_0]\tr = [\widehat{\x}_{s-1}(t)\tr, \alpha(0)]\tr.
\end{align*}
We can thus write its error to the \ac{fom} solution $\x(t)$ in the form
\begin{align*}
    \|\V_{s} \widehat{\x}_{s}^{(0)}(t) - \x(t)\|_2^2 
    &= \|\V_{s-1} \widehat{\x}_{s-1}(t) - \mathbf{p}_{s-1}(t)\|_2^2 + (\alpha(t)-\alpha(0))^2 + \|\mathbf{q}(t)\|_2^2 \\
    &= \|\V_{s-1} \widehat{\x}_{s-1}(t) - \x(t)\|_2^2 + (\alpha(t)-\alpha(0))^2 - \alpha(t)^2.
\end{align*}
The result follows.
\end{proof}

Proposition \ref{thm:expansionerror} shows that when the new initial guesses $\widehat{\mathbf{c}}_{s}^{(0)}, \widehat{\mathbf{A}}_{s}^{(0)}, \widehat{\mathbf{H}}_{s}^{(0)}$ are computed during iteration $s$ of Algorithm \ref{alg:flower}, the error $\|\V_s \widehat{\x}_s^{(0)} - \x(t)\|$ committed by the corresponding \ac{rom} at any time step $t$ is equal to the one committed by the optimal \ac{rom} chosen in the previous iteration plus the term $\zeta_s(t) = (\basisvec_{s}\tr \x_0)(\basisvec_{s}\tr \x_0 - 2 \basisvec_{s}\tr \x(t))$.
While $\zeta_s(t)$ is not guaranteed to be negative for all $t > 0$, at initial time $\zeta_s(0) = - (\basisvec_{s}\tr \x_0) \le 0$ and $\widehat{\x}_{s}^{(0)}(t)$ thus performs at least as well as $\widehat{\x}_{s-1}(t)$ close to $t=0$.
Moreover, because $\zeta_s(t)$ scales with $\basisvec_{s}\tr \x_0$, Proposition \ref{thm:expansionerror} implies in particular that if $\x_0 \perp \basisvec_{s}$ then $\|\V_{s} \widehat{\x}_{s+1}^{(0)}(t) - \x(t)\| = \|\V_{s-1} \widehat{\x}_{s-1}(t) - \x(t)\|$, i.e., no error is introduced by the expansion of the reduced operators $\widehat{\mathbf{c}}_{s-1}, \widehat{\mathbf{A}}_{s-1}, \widehat{\mathbf{H}}_{s-1}$ to $\widehat{\mathbf{c}}_s^{(0)}, \widehat{\mathbf{A}}_s^{(0)}, \widehat{\mathbf{H}}_s^{(0)}$.

Even with $\basisvec_{s}\tr \x_0 \neq 0$, because the snapshot reconstruction error $\sum_{k=1}^K \|\V_{s} \widehat{\x}_{s}^{(0)}(t_k) - \x(t_k)\|_2^2$ is used as part of the optimization over the regularization values, Proposition \ref{thm:expansionerror} gives a worst-case guarantee for how much the snapshot reconstruction error may change during iteration $s$:
Applying Proposition \ref{thm:expansionerror} to Remark \ref{rmk:minerrorreduction} yields
\begin{equation}\label{eq:doalmostnoharm}
    \begin{aligned}
    \delta_s^{*} &\le \delta_s^{(0)} = \delta_{s-1}^{*} + \sum_{k=1}^{K} \zeta_s(t_k)
    = K(\basisvec_s \tr \x_0)^2 - 2 (\basisvec_s\tr\x_0) \sum_{k=1}^{K} \basisvec_s\tr \x(t_k) \\
    &\le K(\basisvec_s \tr \x_0)^2 + 2 \sqrt{K} |\basisvec_s\tr\x_0| \sqrt{\sum_{k=1}^{K} (\basisvec_s\tr \x(t_k))^2} \\
    &= K(\basisvec_s \tr \x_0)^2 + 2 \sqrt{K} |\basisvec_s\tr\x_0| \sqrt{\varepsilon_s^2 - \varepsilon_{s-1}^2} \\
    &\le 3K (\basisvec_s \tr \x_0)^2 + \varepsilon_s^2 - \varepsilon_{s-1}^2,
\end{aligned}
\end{equation}
where we used Cauchy's inequality, the definition of the \ac{pod} modes in \eqref{eq:POD:minimization}, and Young's inequality.
While \eqref{eq:doalmostnoharm} is an over-approximation (if $\basisvec_s \tr \x_0 = 0$ then $\delta_s^{(0)} = \delta_{s-1}^{*}$), it gives an intuition for the worst-case error increase from one iteration of Algorithm~\ref{alg:flower} to the next.
Moreover, all terms within the upper bound in \eqref{eq:doalmostnoharm} are independent of the learned \ac{rom} and can be computed from the snapshots and basis vectors alone.
As such, \eqref{eq:doalmostnoharm} can be used to identify iterations of Algorithm \ref{alg:flower} where careful regularization is particularly important.

Applying Proposition \ref{thm:expansionerror} recursively shows that the snapshot reconstruction error with the \ac{rom} solution to \eqref{eq:rom:s} with the matrices $\opc_r^{(0)}$, $\opA_r^{(0)}$, $\opH_r^{(0)}$ is at least as good as with the choices $\opc_r = \mathbf{0}_{r \times 1}$, $\opc_r = \mathbf{0}_{r \times r}$, $\opc_r = \mathbf{0}_{r \times r^{(2)}}$:

\begin{corollary}\label{thm:comparisonzero}
    Suppose Algorithm \ref{alg:flower} was run with $r_0 = 1$, $\opc_1 = \opA_1 = \opH_1 = \mathbf{0}_{1 \times 1}$, and arbitrary $\overline{\delta} \ge 0$.
    Let $\x(t_1), \dots, \x(t_K)$ be the \ac{fom} solution snapshots, and $\widehat{\x}_r^{(0)} : [0, t_K] \rightarrow \R^r$ the reduced-order solution to \eqref{eq:rom:s} with the operators $\opc_r^{(0)}$, $\opA_r^{(0)}$, $\opH_r^{(0)}$ computed in iteration $s=r$ of Algorithm \ref{alg:flower}.
    Let $\overline{\x}_r : [0, t_K] \rightarrow \R^r$ be the solution to \eqref{eq:rom:s} with the operators $\opc_r = \mathbf{0}_{r \times 1}$, $\opA_r = \mathbf{0}_{r \times r}$, $\opH_r = \mathbf{0}_{r \times r^{(2)}}$.
    Then
    \begin{align}
        \sum_{k=1}^{K} \|\V_r \widehat{\x}_r^{(0)}(t_k) - \x(t_k) \|_2^2 \le \sum_{k=1}^{K} \|\V_r \overline{\x}_r(t_k) - \x(t_k) \|_2^2.
    \end{align}
    The two sums are equal if and only if $\opc_r^{(0)} = \mathbf{0}_{r \times 1}$, $\opA_r^{(0)} = \mathbf{0}_{r \times r}$, $\opH_r^{(0)} = \mathbf{0}_{r \times r^{(2)}}$.
\end{corollary}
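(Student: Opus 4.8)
The plan is to track the full-space snapshot reconstruction error of two sequences of \acp{rom} at once and to show that the nested sequence never does worse than the all-zero sequence. Write $E_s := \sum_{k=1}^K \|\V_s \widehat{\x}_s(t_k) - \x(t_k)\|_2^2$ for the error of the iteration-$s$ output \ac{rom}, $E_s^{(0)}$ for the error of the corresponding initial guess $\widehat{\x}_s^{(0)}$, and $\overline{E}_s := \sum_{k=1}^K \|\V_s \overline{\x}_s(t_k) - \x(t_k)\|_2^2$ for the all-zero \ac{rom} $\overline{\x}_s$ at dimension $s$, with $\zeta_s$ as in Proposition \ref{thm:expansionerror}. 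The observation I would build everything on is that $\overline{\x}_s$ is itself the zero-extension of $\overline{\x}_{s-1}$: solving $\dot{\widehat{\x}} = \mathbf{0}$ from $\V_s\tr \x_0$ gives the constant $\V_s\tr \x_0$, whose first $s-1$ coordinates are exactly $\overline{\x}_{s-1}$ and whose last coordinate is $\basisvec_s\tr\x_0$. Hence the hypotheses of Proposition \ref{thm:expansionerror} are met by the all-zero sequence verbatim, not only by the nested one.

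Summing Proposition \ref{thm:expansionerror} over the snapshots then yields the two recursions $E_s^{(0)} = E_{s-1} + \sum_{k=1}^K \zeta_s(t_k)$ (nested) and $\overline{E}_s = \overline{E}_{s-1} + \sum_{k=1}^K \zeta_s(t_k)$ (all-zero). The point to emphasize is that $\zeta_s$ depends only on $\x$, $\x_0$, and $\basisvec_s$, never on the learned operators, so the two recursions share identical increments. I would next note that the full-space error of any dimension-$s$ \ac{rom} decomposes by orthogonality of $\V_s$ as its reduced reconstruction error $\delta_s$ plus the fixed \ac{pod} error $\varepsilon_s^2$ from \eqref{eq:POD:minimization}; since $\varepsilon_s^2$ is common to output and initial guess, Remark \ref{rmk:minerrorreduction}'s guarantee $\delta_s^* \le \delta_s^{(0)}$ translates directly into $E_s \le E_s^{(0)}$. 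Combining this with the nested recursion gives the one-sided step $E_s \le E_{s-1} + \sum_{k=1}^K \zeta_s(t_k)$, whereas the all-zero sequence attains this with equality.

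With these ingredients the inequality follows by a short induction showing $E_s \le \overline{E}_s$ for $s = 1, \dots, r-1$. At the base $s=1$ the assumption $\opc_1 = \opA_1 = \opH_1 = \mathbf{0}$ makes the iteration-1 \ac{rom} coincide with $\overline{\x}_1$, so $E_1 = \overline{E}_1$. For the step, $E_{s-1} \le \overline{E}_{s-1}$ implies $E_s \le E_{s-1} + \sum_{k=1}^K\zeta_s(t_k) \le \overline{E}_{s-1} + \sum_{k=1}^K\zeta_s(t_k) = \overline{E}_s$. Applying the (equality) recursion one last time to the initial guess at dimension $r$ then gives $E_r^{(0)} = E_{r-1} + \sum_{k=1}^K\zeta_r(t_k) \le \overline{E}_{r-1} + \sum_{k=1}^K\zeta_r(t_k) = \overline{E}_r$, which is exactly the claimed bound.

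For the equality characterization, I would use that $E_r^{(0)} = \overline{E}_r$ is equivalent to $E_{r-1} = \overline{E}_{r-1}$ (both recursions share the same $\zeta_r$ term), and then trace tightness back through the induction: equality forces each step inequality $E_s \le E_s^{(0)}$, i.e. $\delta_s^* = \delta_s^{(0)}$, to hold with equality for $s = 2, \dots, r-1$. By the singular-value tiebreak of Remark \ref{rmk:minerrorreduction} (where $\sigma_s^{(0)} = \infty$ is selected), this means the algorithm returns the initial guess unchanged, $\opc_s = \opc_s^{(0)}$ and likewise for $\opA_s$, $\opH_s$; starting from $\opc_1 = \mathbf{0}$ and using that the zero-extension of a zero operator is zero, an upward induction gives $\opc_r^{(0)} = \opA_r^{(0)} = \opH_r^{(0)} = \mathbf{0}$, while the converse is immediate because vanishing final operators force all earlier operators to vanish and hence $\widehat{\x}_r^{(0)} = \overline{\x}_r$. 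The main obstacle I anticipate is cleanly recognizing and justifying that Proposition \ref{thm:expansionerror} governs the all-zero sequence with the \emph{same} $\zeta_s$ increments, since this is precisely what makes the two recursions comparable term by term; the secondary care point is handling the equality direction through the tiebreak logic of Remark \ref{rmk:minerrorreduction} rather than through any uniqueness of the least-squares minimizer.
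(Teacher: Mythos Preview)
Your proof is correct and follows essentially the same approach as the paper's: recursive application of Proposition~\ref{thm:expansionerror} together with Remark~\ref{rmk:minerrorreduction} at each dimension, with the equality case traced back through the tiebreak logic. Your framing of the all-zero \ac{rom} as itself satisfying the Proposition~\ref{thm:expansionerror} recursion (with identical $\zeta_s$ increments) is a clean way to organize the comparison that the paper carries out by explicitly expanding $\x(t) = \sum_s \alpha_s(t)\basisvec_s + \mathbf{q}(t)$ and computing the all-zero error in closed form.
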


\begin{proof}
    The proof is provided in Appendix \ref{sec:proofs}.
\end{proof}

Corollary \ref{thm:comparisonzero} states that the snapshot reconstruction accuracy with the \ac{rom} solution $\widehat{\x}_r^{(0)}$ is at least as good as when the operators $\opc_r, \opA_r, \opH_r$ are chosen as zero-matrices. 
The corollary is pessimistic in that the proof bounds $\delta_s^{*} \le \delta_s^{(0)}$ for each iteration of the Algorithm, i.e., it considers the worst-case scenario where \ac{opinf} fails to reduce the error in \textit{all} iterations $s=1, \dots, r-1$, and Algorithm \ref{alg:flower} goes into iteration $s=r$ with the trivial $\opc_r^{(0)} = \mathbf{0}_{r \times 1}$, $\opA_r^{(0)} = \mathbf{0}_{r \times r}$, $\opH_r^{(0)} = \mathbf{0}_{r \times r^{(2)}}$.
If that is the case, we recover the standard \ac{opinf} minimization problem as a worst-case guarantee;
otherwise, the matrices $\opc_r^{(0)}$, $\opA_r^{(0)}$, $\opH_r^{(0)}$ used as regularizers in the last iteration are for a \ac{rom} that performs strictly better in terms of snapshot reconstruction accuracy than the one associated to the typical Tikhonov regularization towards zero.
While this does not guarantee that the learned nested \ac{rom} with optimal weights outperforms the one learned with standard \ac{opinf}, it is still a good indicator. 
Moreover, the modeler can always compute the standard regularized \ac{opinf} model for comparison as a ``do-no-harm" guarantee.

\subsection{Stability of matrix updates}\label{sec:analysis:stability}


In iteration $s$ of Algorithm \ref{alg:flower}, we compute the operator candidates $\opc_s^{(i)}, \opA_s^{(i)}, \opH_s^{(i)}$ for regularization weights $\omega_c^{(i)}, \omega_A^{(i)}, \omega_H^{(i)}$ by solving the regularized least squares minimization problem \eqref{eq:opinf:regularized:initialguess}.
This is equivalent to solving for the updates $\opc_s^{(i)} = \opc_s^{(0)} + \Delta^c_s$, $\opA_s^{(i)} = \opA_s^{(0)} + \Delta^A_s$, $\opH_s^{(i)} = \opH_s^{(0)} + \Delta^H_s$ with
\begin{equation}\label{eq:opinf:regularized:zero}
    \begin{aligned}
    \min_{\Delta^c_s, \Delta^A_s, \Delta^H_s} &\|\Datamatrix_s \left[\begin{array}{l}
        (\Delta^c_s)\tr\\ 
        (\Delta^A_s)\tr\\ 
        (\Delta^H_s)\tr
    \end{array}\right] - \Rhsmatrix_{\Delta} \|_F^2
    + \omega_c^2\| \Delta^c_s\|_F^2 + \omega_A^2\|\Delta^A_s\|_F^2 + \omega_H^2\|\Delta^H_s\|_F^2
\end{aligned}
\end{equation}
and residual matrix 
$\Rhsmatrix_{\Delta} := \Rhsmatrix_s - \Datamatrix_s [\opc_s^{(0)}, \opA_s^{(0)}, \opH_s^{(0)}]\tr \in \R^{K \times s}$.
The norm of the updates is bounded from above by
\begin{align}\label{eq:update:bound}
    \| \text{row}_j(\Delta^c_s)\|_2^2 + \|\text{row}_j(\Delta^A_s)\|_2^2 + \|\text{row}_j(\Delta^H_s)\|_2^2 &\le \frac{1}{\sigma_s^{(i)}} \|\text{col}_j(\Rhsmatrix_{\Delta})\|_2^2 & j = 1, \dots, s,
\end{align}
where $\sigma_s^{(i)}$ is the smallest singular value of the extended data matrix 
$$[\Datamatrix_s\tr, \text{diag}(\mathbf{w})]\tr \in \R^{(K+s_{\rm{tot}} \times s_{\rm{tot}})}$$ with weight vector $\mathbf{w}^{(i)}_s = [\omega_{c}^{(i)}, \omega_{A}^{(i)} \mathbf{1}_{1 \times s}, \omega_{H}^{(i)} \mathbf{1}_{1 \times s^{(2)}}] \in \R^{s \times s_{\rm{tot}}}$ computed in line \ref{alg:flower:def:w} of Algorithm \ref{alg:flower}.
The singular value $\sigma_s^{(i)}$ is bounded from below by $\sigma_s^{(i)} \ge \sigma_{\min > 0}(\Datamatrix_s) + \min\{\omega_c^{(i)}, \omega_A^{(i)}, \omega_H^{(i)}\} >0$, where $\sigma_{\min > 0}(\Datamatrix_s)$ is the smallest, positive singular value of $\Datamatrix_s$.
This bound suggests that once $s$ is large enough for $\Datamatrix_s$ to be rank-deficient with $\sigma_{\min > 0}(\Datamatrix_s) \approx 0$, the updates $\Delta^c_s, \Delta^A_s, \Delta^H_s$ may increasingly be affected by numerical noise.
For this reason, we maximize over $\sigma_s^{(i)}$ to choose the regularization weights.

The bound \eqref{eq:update:bound} can be constructed analogously for the solution to the standard regularized \ac{opinf} minimization problem (of dimension $s$) with the residual matrix $\Rhsmatrix_{\Delta}$ replaced by $\Rhsmatrix_s$ but the same scaling coefficient $1/\sigma_s^{(i)}$. 
However, by construction, the norm of the residual decreases in the course of Algorithm \ref{alg:flower}:
For $1 \le j \le s - 1$, 
\begin{align}\label{eq:residual:change}
    \|\text{col}_j(\Rhsmatrix_{\Delta})\|_2^2 = \|\text{col}_j(\Rhsmatrix_{r}) - \Datamatrix_{s-1} [\opc_{s-1}, \opA_{s-1}, \opH_{s-1}]\tr\|_2^2
\end{align}
i.e., the $j$-th column of the residual with the initial guess in iteration $s$ is equal to the $j$-th column of the optimized residual at the end of iteration $s-1$. 
In combination with \eqref{eq:update:bound}, this equality implies that if any iteration $s$ successfully reduced its residual $\|\text{col}_j(\Rhsmatrix_{\Delta})\|_2^2$ to zero, it remains zero in all following iterations and the rows of the learned operators will not be needlessly updated.
This result is particularly important when warm-starting Algorithm \ref{alg:flower} with varying model approximations as it guarantees that no overly complicated model is learned.

The last column of the residual matrix is invariant under any previous iterations of Algorithm \ref{alg:flower}, i.e., $\text{col}_s(\Rhsmatrix_{\Delta}) = \text{col}_s(\Rhsmatrix_r)$.
However, its norm is bounded
\begin{align*}
    \|\text{col}_s(\Rhsmatrix_{\Delta})\|^2 
    &\approx  \frac{1}{\Delta t} \|\basisvec_s\tr(\x(t_0)-\x(t_1))\|_2^2 + \frac{1}{\Delta t} \sum_{k=2}^K \|\basisvec_s\tr(\x(t_k)-\x(t_{k-1}))\|_2^2 + \mathcal{O}(\Delta t) \\
    &\le \frac{6}{\Delta t} \sum_{k=1}^K \|\basisvec_s\tr \x(t_k)\|_2^2 + \mathcal{O}(\Delta t) \\
    &= \frac{6}{\Delta t}(\varepsilon_s^2 - \varepsilon_{s-1}^2) + \mathcal{O}(\Delta t),
\end{align*}
where we have used forward and backward finite differences to approximate the time-derivative, Young's inequality, and the definition \eqref{eq:POD:minimization} of the \ac{pod} error. 
Because $\varepsilon_s^2 - \varepsilon_{s-1}^2 \rightarrow 0$ for $s \rightarrow \infty$, the norm $\|\text{col}_s(\Rhsmatrix_{\Delta})\|^2$ decays asymptotically. 

In combination with \eqref{eq:residual:change}, we conclude that, asymptotically, the absolute change in residual caused by the learned operator updates decreases for all candidate regularization parameters.
This means that the actions of the learned operators on the projected snapshots become similar for the different regularization parameters, and vary primarily in the orthogonal complement $\V_s^{\perp}$. 
Consequently, the snapshot reconstruction error becomes similar, and, as $s$ increases, it becomes increasingly likely that Algorithm \ref{alg:flower} chooses a stronger regularization without sacrificing reconstruction accuracy.


\section{Extensions}\label{sec:extensions}

In this section, we briefly discuss extensions and use cases of Algorithm \ref{alg:flower} that are helpful in practice.

\subsection{Individualized regularization}\label{sec:block-regularization}

For simplicity, in Algorithm \ref{alg:flower}, we apply the same regularization to each entry of any one operator update $\Delta_s^{c}, \Delta_s^{A}, \Delta_s^{H}$.
However, this choice does not reflect how much trust we have in the entries of $\opc_{s-1}, \opA_{s-1}, \opH_{s-1}$ from the previous iteration. 
As updates become smaller with increasing $s$, the interactions between the most important modes can increasingly be trusted and thus be regularized more strongly.
In fact, too little regularization can cause the quality for the \ac{rom} defined on the subspaces $\V_{s-1}, \V_{s-2}, ...$ to deteriorate.
In contrast, updates to the entries for the higher-order modes tend to be larger, partially due to the encoded higher-frequency components, and they take more iterations to converge.
To reflect this trust, one can introduce individual weights for each update entry that is based on, for example, how many iterations ago its first approximation was learned. 
This way, stronger regularization can be applied to the earlier entries.
This comes, however, at the cost of identifying a suitable weight update strategy.

A compromise is to specify --- for each operator --- one weight for all entries that were learned in previous iterations and a smaller weight for all new operator entries.
This strategy tends to increase the smallest singular value of the regularized data matrix more strongly than if regularization is applied evenly to all operator updates; the downside is the more expensive optimization over six instead of three regularization parameters.

The extreme case of this strategy is to enforce the entries from previous iterations. 
This reduces the amount of unknowns learned in each iteration from $s s_{\rm{tot}} = s + \frac32 s^2 + \frac12 s^3$ to $s_{\rm{tot}} + (s-1)(1+s) =  \frac32 s + \frac32 s^2$.
Because the corresponding data matrix is skinnier for the first $s-1$ test functions, its smallest singular value is larger than when all entries are updated together.
Moreover, by enforcing the entries learned in previous iterations, when restricting the learned \ac{rom} for $\V_r$ back onto the subspaces $\V_s$ with $s < r$, one recovers the same model that was chosen during iteration $s$ of Algorithm \ref{alg:flower} and its accuracy.
The disadvantage of this approach is that it can easily lead to overfitting, with the size of new entries growing exponentially.
It thus needs to be used carefully, e.g., with a fall-back option if the error does not behave as expected.

\subsection{Iterative updates}\label{sec:iterative}

While we previously primarily analyzed the later iterations $1 \ll s \le r$ of Algorithm \ref{alg:flower} (Section \ref{sec:analysis:stability}), we now focus on the earlier iterations $1 \le s \ll r$.
The entries learned early are associated to the most dominant \ac{pod} modes and thus determine the majority of the snapshot reconstruction error of the learned \ac{rom}, regardless of dimension.
Moreover --- because Algorithm \ref{alg:flower} is iterative --- they influence the operators learned in all subsequent iterations.
However, in those early iterations, the residual $\Rhsmatrix_s - \Datamatrix_{s-1} [\opc_{s-1}, \opA_{s-1}, \opH_{s-1}]\tr$ is typically still large because the operators are not yet expressive enough to capture all details in $\Rhsmatrix_s$. 
As a consequence, the \ac{rom} \eqref{eq:rom:s} associated to learned operators $\opc_s, \opA_s, \opH_s$ might perform poorly.
Because the derivatives $\dot{\mathbf{p}}_s(t_k) = \V_s\tr \dot{\x}(t_k)$ are matched only in a least squares sense with a large overall residual, the \ac{rom} solution $\widehat{\x}_s(t)$ may diverge early from the intended trajectory $\mathbf{p}(t)$.
This behavior is correctable by including the \ac{rom} solution into the \ac{opinf} learning problem via exchanging the call to \texttt{OpInf} (Algorithm \ref{alg:opinf}) in line \ref{alg:flower:def:learn} of Algorithm \ref{alg:flower} to a call to \texttt{IterativeUpdates}, Algorithm \ref{alg:iterative}.

\begin{algorithm}
\caption{IterativeUpdates}\label{alg:iterative}
\begin{algorithmic}[1]
\smallskip
\State \textbf{Input:}
reduced dimension $s$,
data matrix $\Datamatrix_s \in \R^{K \times s_{\rm{tot}}}$, 
time derivative matrix $\Rhsmatrix_s\in \R^{K \times s}$, 
weight vector $\mathbf{w} \in \R^{s_{\rm{tot}}}_{>0}$,
initial guesses $\opc_{\rm{itr}} \in \R^s, \opA_{\rm{itr}} \in \R^{s \times s}, \opH_{\rm{itr}} \in \R^{s \times s^{(2)}}$, projected initial condition $\mathbf{p}_s(t_1) \in \R^s$
\State \textbf{Hyper-parameters:} repeat threshold $i_{\rm{max}} \in \mathbb{N}$
\State \textbf{Output:} Reduced operators $\opc_{s} \in \R^{s}$,
$\opA_{s} \in \R^{s \times s}$,
$\opH_{s} \in \R^{s \times s^{(2)}}$, minimal encountered singular value $\sigma_{\min}$
\Statex \hrulefill
\State Initialize $\sigma_{\min} \gets \infty$
\smallskip
\Statex \texttt{\# Iterate over the learned operators}
\For{$i = 1, \dots, i_{\max}$}
\smallskip
\Statex \quad \texttt{\# Solve current ROM}
\State Solve $\dot{\widehat{\x}} = \opH_{\rm{itr}} [\widehat{\x} \widehat{\otimes} \widehat{\x}] + \opA_{\rm{itr}}\widehat{\x} + \opc_{\rm{itr}}$ ($0 < t < t_K$), $\widehat{\x}(0) = \mathbf{p}_s(t_1)$
\State $\widehat{\mathbf{X}}_{\rm{itr}} \gets [\widehat{\x}(t_1), \dots, \widehat{\x}(t_K)] \in \R^{s \times K}$ 
\smallskip
\Statex \quad \texttt{\# Build matrices}
\State $\Datamatrix_{\rm{itr}} \gets [\mathbf{1}_{K \times 1}, \widehat{\mathbf{X}}_{\rm{itr}}\tr, [\widehat{\mathbf{X}}_{\rm{itr}} \odot \widehat{\mathbf{X}}_{\rm{itr}}]\tr \mathbf{C}_s]$
\State Expand data matrix $\Datamatrix \gets [\Datamatrix_s\tr, \Datamatrix_{\rm{itr}}\tr]\tr \in \R^{2K \times s_{\rm{tot}}}$
\State Expand time-derivative matrix $\Rhsmatrix \gets [\Rhsmatrix_s\tr, \Rhsmatrix_s\tr]\tr \in \R^{2K \times s}$
\smallskip
\Statex \quad \texttt{\# Solve least squares problem}
\State $\opc_{\rm{itr}}, \opA_{\rm{itr}}, \opH_{\rm{itr}}, \sigma \gets \texttt{OpInf}(\Datamatrix, \Rhsmatrix, \mathbf{w}, \opc_{\rm{itr}}, \opA_{\rm{itr}}, \opH_{\rm{itr}})$
\State $\sigma_{\min} \gets \min\{\sigma_{\min}, \sigma\}$
\smallskip
\EndFor \\
\smallskip
\Return $\opc_{\rm{itr}}, \opA_{\rm{itr}}, \opH_{\rm{itr}}, \sigma_{\min}$
\smallskip
\end{algorithmic}
\end{algorithm}

Algorithm \ref{alg:iterative} updates, in each iteration $i$, the operators $\opc_{\rm{itr}}^{(i-1)}, \opA_{\rm{itr}}^{(i-1)}, \opH_{\rm{itr}}^{(i-1)}$ by solving the regularized least squares problem
\begin{equation}\label{eq:opinf:iterative}
    \begin{aligned}
    \min_{\opc_s, \opA_s, \opH_s} &\|\Datamatrix_s \left[\begin{array}{c}
        \opc_s\tr\\ 
        \opA_s\tr\\ 
        \opH_s\tr
    \end{array}\right] - \Rhsmatrix_s \|_F^2 
    + \|\Datamatrix_{\rm{itr}} \left[\begin{array}{c}
        \opc_s\tr\\ 
        \opA_s\tr\\ 
        \opH_s\tr
    \end{array}\right] - \Rhsmatrix_s \|_F^2 
    \\
    &\qquad \qquad \qquad \qquad + \| \text{diag}(\mathbf{w}) \left[
    \begin{array}{l}
    \opc_s\tr-[\opc_{\rm{itr}}^{(i-1)}]\tr \\
    \opA_s\tr-[\opA_{\rm{itr}}^{(i-1)}]\tr \\
    \opH_s\tr-[\opH_{\rm{itr}}^{(i-1)}]\tr
    \end{array}
    \right]\|_F^2.
\end{aligned}
\end{equation}
The first and the third term, together, comprise the standard \ac{opinf} cost function \eqref{eq:opinf:regularized:initialguess} with regularization towards the iterates $\opc_{\rm{itr}}^{(i-1)}, \opA_{\rm{itr}}^{(i-1)}, \opH_{\rm{itr}}^{(i-1)}$ (instead of $\opc_s^{(0)}, \opA_s^{(0)}, \opH_s^{(0)}$).
In the second term, instead of using the \ac{fom} snapshots, the data matrix $\Datamatrix_{\rm{itr}}$ is constructed from the \ac{rom} solution $\widehat{\x}_{\rm{itr}}(t)$ to 
\begin{equation}\label{eq:rom:itr}
    \begin{aligned}
    \dot{\widehat{\x}}_{\rm{itr}}(t) 
    &= \opc_{\rm{itr}}^{(i-1)} + \opA_{\rm{itr}}^{(i-1)}\widehat{\x}_{\rm{itr}}(t) + \opH_{\rm{itr}}^{(i-1)}[\widehat{\x}_{\rm{itr}}(t) \widehat{\otimes} \widehat{\x}_{\rm{itr}}(t)], \quad t > 0
\end{aligned}
\end{equation}
with initial condition $\widehat{\x}_{\rm{itr}}(0) := \V_s\tr \widehat{\x}_0$.
The second term in \eqref{eq:opinf:iterative} hence penalizes updates to $\opc_{\rm{itr}}^{(i-1)}, \opA_{\rm{itr}}^{(i-1)}, \opH_{\rm{itr}}^{(i-1)}$ that would move the current \ac{rom} trajectory $\widehat{\x}_{\rm{itr}}$ further away from the ideal trajectory $\mathbf{p}_s(t)$, and encourages operators that brings it closer.
The distance between the trajectories is measured through the difference $\dot{\widehat{\x}}_{\rm{itr}} - \dot{\mathbf{p}}_s$ in time derivatives.
With this choice of measure\footnote{
Note that $\|\x\|^2 := \| \frac{d}{dt} \x\|^2_{L^2([0, T], \R^s)} + \| \x(0) \|_2^2$ indeed defines a norm on $C^1([0, T], \R^s)$, and that $\widehat{\x}_{\rm{itr}}(0) = \widehat{\mathbf{p}}_s(0) = \V_s\tr \x_0$.
}
we achieve that \eqref{eq:opinf:iterative} remains a linear least squares problem that can be solved with standard techniques.

\begin{remark}
    The idea presented here is similar to the roll-out \ac{opinf} method \cite{uy2023rollouts}, with the difference that \cite{uy2023rollouts} optimizes over the distance $\sum_{k=1}^K \| \widehat{\x}_{\rm{itr}}(t_k) - \mathbf{p}_s(t_k) \|_2^2$ in trajectories directly, resulting in a constrained optimization problem.
\end{remark}

The cost of running \texttt{IterativeLearning} (Algorithm \ref{alg:iterative}) instead of \texttt{OpInf} (Algorithm \ref{alg:opinf}) within \texttt{NestedOpInf} (Algorithm \ref{alg:flower}) increases its runtime by factor $i_{\max}$. 
This cost can be reduced by including a convergence criterion.
In our experience, a small $i_{\max}$ can already significantly improve the operators learned in the Algorithm \ref{alg:flower}. 
In our numerical experiments we use $i_{\max} = 5$ in Section \ref{sec:cubicheat} and $i_{\max} = 1$ in Section \ref{sec:greenland}.


\section{Results}\label{sec:results}


We provide two numerical examples.
In the first example, we contrast nested and standard \ac{opinf} using a 1-dimensional, cubic \ac{pde}.
The second example involves a large-scale model of the Greenland ice sheet.
The numerical experiment is designed to show how nested \ac{opinf} works in face of large model approximation errors.

\subsection{Cubic heat equation}\label{sec:cubicheat}

We consider the parameterized \ac{pde}
\begin{align}\label{eq:fom:cubicheat}
    \dot{x}(t, z) &= \kappa \Delta x(t, z) - x(t, z)^3, &t>0, z \in (0,1)
\end{align}
with initial condition $x(0, z) = 10z(1-z)$ and zero-Dirichlet boundary conditions $x(t, 0) = x(t, 1) = 0$ for all $t \ge 0$.
We consider parameters $\kappa \in [0.001, 0.1]$.
We discretize \eqref{eq:fom:cubicheat} in space with $n=1,001$ piecewise linear finite element basis functions, and compute the solution until $t=1$ using Crank-Nicolson time-stepping with a time step size $\Delta t = 0.001$.
On average, a solve takes \SI{8}{s}.
\footnote{We use the finite element library \texttt{FEniCSx} on a machine with 2.60GHz base speed and 2 TiB main memory across 256 cores. Computations are performed in sequence.} 

\begin{figure}
    \centering
    \includegraphics[width=\linewidth]{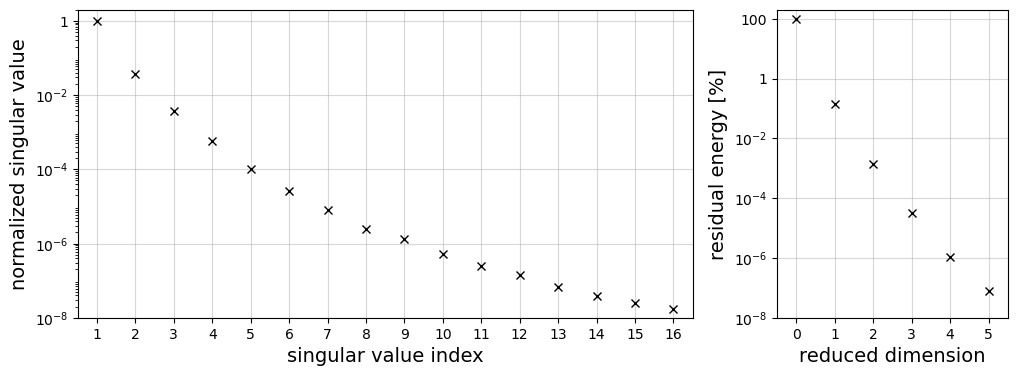}
    \caption{Singular value decay (left) and residual energy (right) for training snapshots}
    \label{fig:cubicheat:singularvalues}
\end{figure}

For our training data, we solve \eqref{eq:fom:cubicheat} for three training parameters $\kappa_1 = 0.1, \kappa_2 =0.01, \kappa_3 =0.001$ until $t=0.2$, for a total of $K = 603$ training snapshots.
Weighing the snapshots in the $L^2((0,1))$ inner product, we compute $r=5$ orthonormal basis functions, capturing 99.99999992\% of the snapshot energy.
The singular value decay is plotted in Figure \ref{fig:cubicheat:singularvalues}.

To train our nested \ac{opinf} \ac{rom}, we run Algorithm \ref{alg:flower} with $n_{\omega} = 24$ combinations of regularization parameters.
With $i_{\max} = 5$ iterations, its runtime is \SI{40}{s}.
To keep offline costs comparable, we train the standard \ac{opinf} \ac{rom} in a grid search with $n_{\omega} = 1,581$ regularization parameter combinations; this takes \SI{53}{s}.
On average, both \acp{rom} take below \SI{16}{ms} to run up to $t=1$, for a speed-up above $500\times$ compared to the \ac{fom}.

\begin{figure}
    \centering
    \includegraphics[width=\linewidth]{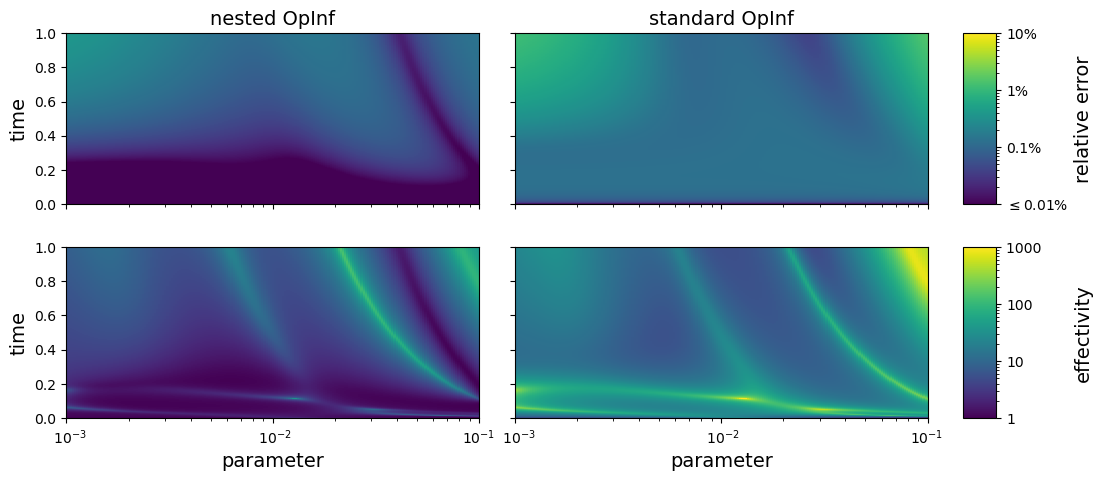}
    \caption{Relative error [\%] (top) and effectivity (bottom) of the standard and nested \ac{opinf} \acp{rom} over the parameter domain (x-axis) and time (y-axis).}
    \label{fig:cubicheat:error2D}
\end{figure}

\begin{table}[]
    \centering
    \begin{tabular}{ll|rr|rr|rr|rr}
    && \multicolumn{4}{c|}{training parameters} & \multicolumn{4}{c}{all parameters} \\
    && \multicolumn{4}{c|}{$\kappa \in \{0.001, 0.01, 0.1\}$} & \multicolumn{4}{c}{$\kappa \in [0.001, 0.1]$} \\\cmidrule{3-10}
    && \multicolumn{2}{c|}{training time} & \multicolumn{2}{c|}{full time} & \multicolumn{2}{c|}{training time} & \multicolumn{2}{c}{full time}\\
    && \multicolumn{2}{c|}{$t \in [0, 0.2]$} & \multicolumn{2}{c|}{$t \in [0, 1]$} & \multicolumn{2}{c|}{$t \in [0, 0.2]$} & \multicolumn{2}{c}{$t \in [0, 1]$}\\\cmidrule{3-10}
    & & mean & max & mean & max & mean & max & mean & max \\
    \hline
    error $[\%]$ &std. \ac{opinf} & 0.119 & 0.178 & 0.395 & 1.616 & 0.112 & 0.178 & 0.209 & 1.616 \\
     &nested \ac{opinf} & 0.004 & 0.008 & 0.088 & 0.397 & 0.005 & 0.030 & 0.070 & 0.397 \\
    &projection & 0.002 & 0.008 & 0.013 & 0.050 & 0.004 & 0.014 & 0.012 & 0.050 \\
    \hline
    effectivity & std. \ac{opinf} & 83 & 380 & 99 & 788 & 51 & 1825 & 33 & 1825 \\
     & nested \ac{opinf} & 2 & 7 & 12 & 91 & 2 & 35 & 8 & 136 \\
    \end{tabular}
    \caption{Mean and maximum relative error [\%] and effectivity of the standard and nested \ac{opinf} \acp{rom} for different parameter sets and time intervals. The values for $\kappa \in [0.001, 0.1]$ are computed over 201 evenly spaced parameters.}
    \label{tab:cubicheat:error}
\end{table}

The top row of Figure \ref{fig:cubicheat:error2D} shows the relative error of both \acp{rom} over time and parameter domain.
Most notably, the error incurred by the nested \ac{rom} remains below \SI{0.03}{\%} for the whole training time interval $[0, 0.2]$ for all parameters, with an average error of \SI{0.005}{\%}.
In comparison, the standard \ac{opinf} model incurs an average error of \SI{0.112}{\%}, with a maximum of $\SI{0.178}{\%}$.
Beyond the training time interval the error increases for both models, though the average error with the nested \ac{rom} remains smaller (\SI{0.07}{\%} compared to \SI{0.2}{\%}).
As the increase in error may be partially unavoidable due to the fixed reduced space $\V_r$, in the bottom row of Figure \ref{fig:cubicheat:error2D} we compare the effectivities of the \acp{rom}, which we define as the ratio 
\begin{align}
    \frac{\sqrt{\Delta t\sum_{k=1}^{k_{\max}} \|\V_r \widehat{\x}_r(t_k) - \x(t_k)\|^2}}{\sqrt{\Delta t\sum_{k=1}^{k_{\max}} \|\V_r \V_r\tr \x(t_k) - \x(t_k)\|^2}} \ge 1
\end{align}
of the \ac{rom} error to the projection error, with $k_{\max} = 201$ when on the training time interval $[0, 0.2]$ and $k_{\max} = 1001$ when comparing on the entire time interval $[0, 1]$.
Overall, we see that the effectivities of the nested \ac{opinf} \ac{rom} are smaller than for the standard \ac{rom}, with a mean of 8 and a maximum of 136 (compared to 33 and 1,825 for standard \ac{opinf}).
The mean and maximum values for the different training and testing time and parameter combinations are reported in Table \ref{tab:cubicheat:error} for reference.

\begin{figure}
    \centering
    \includegraphics[width=\linewidth]{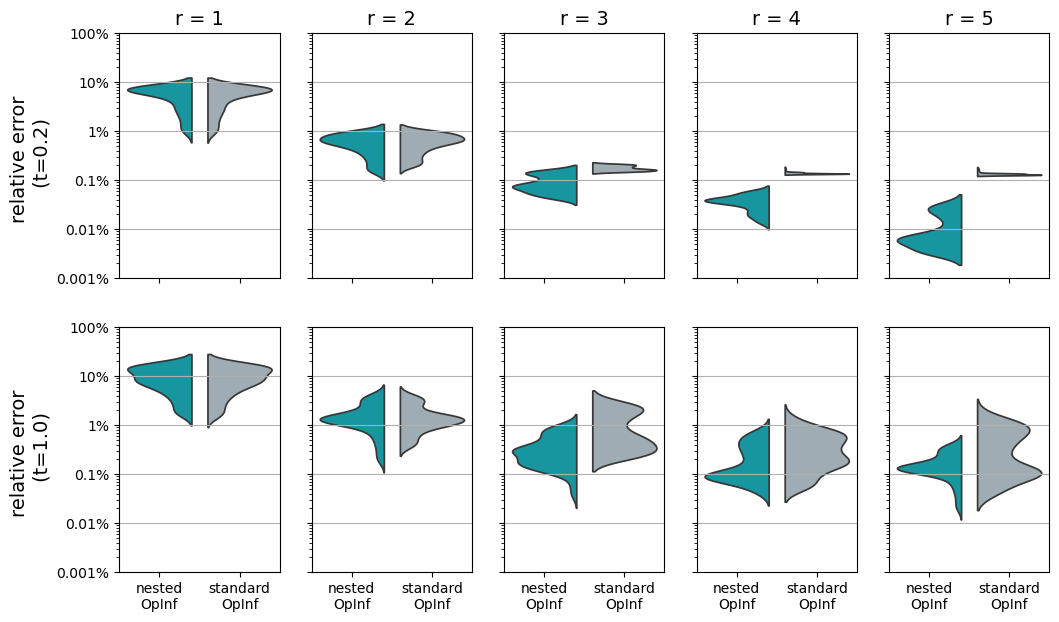}
    \caption{Distributions of the relative error [\%] over the parameter domain at the final training time $t=0.2$ (top row) and the final time $t=1.0$ for nested and standard \ac{opinf} by reduced dimension $r$.}
    \label{fig:cubicheat:error-by-r}
\end{figure}

We next compare the performance of nested and standard \ac{opinf} for different reduced dimensions.
To this end, Figure \ref{fig:cubicheat:error-by-r} compares the distribution of the relative error at times $t=0.2$ and $t=1.0$ over the parameter domain.
Here, the nested \ac{opinf} \acp{rom} use the operators $\opc_s$, $\opA_s$, $\opH_s$ ($s=1, \dots, 5$) obtained in the course of running Algorithm~\ref{alg:flower} for the previous results ($n_{\omega} = 24$).
The standard \ac{opinf} \acp{rom} for $r=1, \dots, 4$ were trained anew with regularization weights optimized individually for each \ac{rom} ($n_{\omega} = 1,581$);
the \ac{rom} for $r=5$ is the same as in the previous results.
As expected for $r=1$, the relative errors for both training methods are comparable, differing only marginally as a consequence to the different candidate regularization weights.
As $r$ increases, the informed initial guesses with nested \ac{opinf} increasingly yield benefits:
For $t=0.2$, the error with standard \ac{opinf} stagnates around \SI{0.2}{\%} when the data matrix $\Datamatrix_r$ becomes rank-deficient ($r \ge 3$), while the error with nested \ac{opinf} continues to decrease.
In the time-generalization to $t=1.0$, the difference between the two methods is less dominant, with nested \ac{opinf} still showing the smaller minimum, mean, and maximum errors for $r \ge 3$.

\subsection{Greenland ice sheet}
\label{sec:greenland}

\begin{figure}
    \centering
    \includegraphics[width=0.8\linewidth]{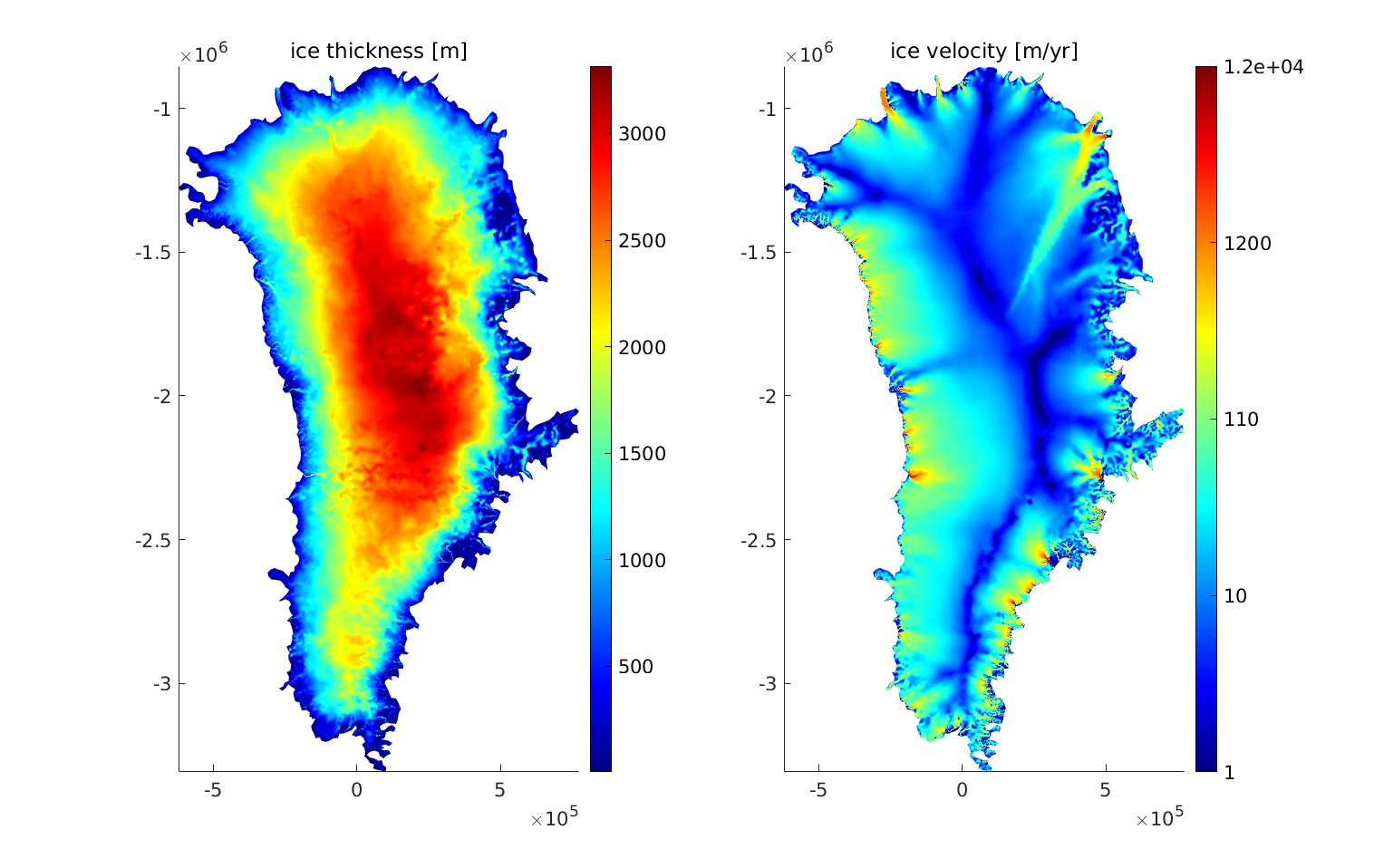}
    \caption{2015 ice thickness (left, in meters) and ice velocity (right, in meters per year) initial conditions at parameter $z = 0.5$.}
    \label{fig:Greenland:fields}
\end{figure}

In the following we build a \ac{rom} for the 2020-2050 change in ice thickness of the Greenland ice sheet under a varying parameter $z \in [0.2625, 0.7375]$ characterizing basal friction.
We model the ice thickness field $h$ and depth-averaged velocity field $\overline{\mathbf{v}}$ of the Greenland ice sheet under the (low-emission) shared economic pathway scenario SSP1-2.6 (\cite{RN1,RN3}, chapter 1.6).
The governing equation for the ice thickness is
\begin{align}\label{eq:Greenland:thickness}
    \dot{h}(t, \mathbf{y}; z) &= - \nabla \cdot (\overline{\mathbf{v}}(t, \mathbf{y}; z, h)h(t, \mathbf{y}; z)) + m_s(t, \mathbf{y}; h) &\forall ~ t > 2015, ~\mathbf{y} \in \Omega
\end{align}
with initial condition $h(2015, \,\cdot\,; z ) = h_0$ (shown in Figure \ref{fig:Greenland:fields}, left).
The field $m_s$ is the surface mass balance following the atmospheric protocol for Greenland in \cite{nowicki2020experimental}, and $\Omega \in \R^2$ is the domain of the Greenland ice sheet extracted from \cite{BedMachineV5, morlighem2017bedmachine} as described in \cite{aretz2025multifidelity}.
At any time $t$, the velocity is modeled through the Shallow-Shelf Approximation (\cite{MacAyeal1989}).
For conciseness, we refer to \cite{aretz2025multifidelity}, Sections 1 and 4.1 (model number 7), for a detailed description of our experimental protocol and the governing equations including the coupling to ice temperature.
Our model is discretized and run in the Ice-Sheet and Sea-Level System Model (\cite{issm}), a glaciology community code.
Using linear finite elements, the dimension for the discretized ice thickness $\mathbf{h}(t; z)$ and the two horizontal components of the depth-averaged velocity field is $n=20,455$ each.
Time step sizes are chosen adaptively to meet a CFL condition.

For any simulation year, the surface mass balance field $m_s$ provides the local rates of ice accumulation on the ice sheet's surface, as computed by CNRM-CM6-1 climate model \cite{voldoire2018cnrm}, with an altitude-dependent adjustment computed from the simulated ice thickness $\mathbf{h}$.
The velocity field $\overline{\mathbf{v}}$ depends on $\mathbf{h}$ through the slope of the ice surface and the effective pressure at the ice base.
Additionally, for our experimental setup here, $\overline{\mathbf{v}}$ depends on a deterministic parameter $z \in [0.2625, 0.7375]$, which maps onto the basal friction field $\alpha(z)$ in the sliding basal boundary condition for $\overline{\mathbf{v}}$.
Specifically, we choose $z$ to be the z-value of the dominant mode in the normal distribution of $\log(\alpha)$ in \cite{aretz2025multifidelity}, section 2.2.

We generate training trajectories of the ice thickness $\{\mathbf{h}(t_k^{(i)}; z_i)\}_{k=1}^{K_i}$ and depth-averaged ice velocity $\{\overline{\mathbf{v}}(t_k^{(i)}; z_i)\}_{k=1}^{K_i}$ for 17 equidistantly spaced training parameters $\{z_i\}_{i=1}^{17}$.
Note that the temporal discretizations $2015 = t_1^{(i)} < t_2^{(i)} < \dots < t_{K_i}^{(i)} = 2050$ differ between training parameters $z_i$ because of the adaptive time-stepping; however, each temporal discretization includes the integers $2015, 2017, \dots, 2050$ where the surface mass balance changes.
After dropping the 5-year burn-in time\footnote{The burn-in is an initial time period in which the model adjusts to changed parameterizations.}, we compute the training snapshot $\x(t_k^{(i)}; z_i) := \mathbf{h}(t_k^{(i)}; z_i) - \mathbf{h}(2020; z_i)$, $1 \le k \le K_1$ and $1\le i \le 17$, as the ice thickness change compared to the year 2020.
In total, we obtain $K = 6,976$ training snapshots.
For testing, we generate trajectories of the ice thickness for 240 additional parameters, spaced equidistantly between the original training parameters such that there is no overlap between the training and the testing parameter sets.
On average, the computation of a training or testing snapshot takes \SI{21}{min} of CPU time (run in parallel with 2 cores per snapshot on a machine with 2.60GHz base speed and 2 TiB main memory across 256 cores).

\begin{figure}
    \centering
    \includegraphics[width=\linewidth]{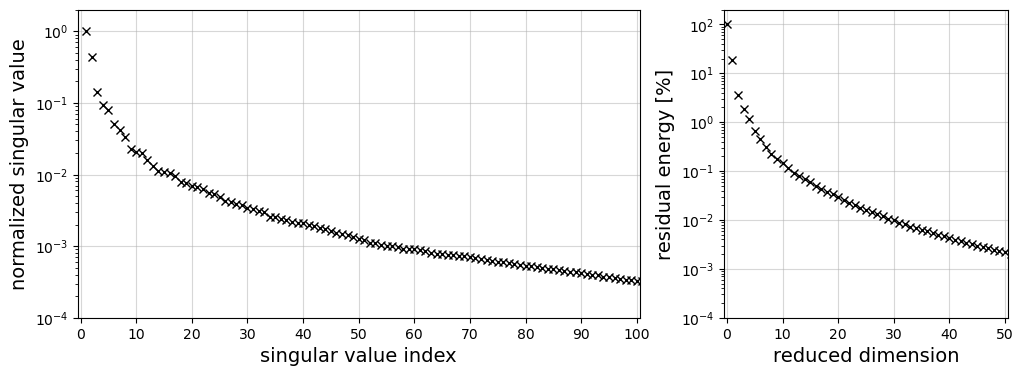}
    \caption{Singular value decay (left) and residual energy (right) for training snapshots for the Greenland ice sheet data.}
    \label{fig:Greenland:singularvalues}
\end{figure}

Figure \ref{fig:Greenland:singularvalues} shows the singular value decay and the residual energy for our training data.
We  build a \ac{pod} space $\V_r$ of reduced dimension $r = 30$, capturing \SI{99.990}{\%} of the residual energy.
Because the ice thickness equation \eqref{eq:Greenland:thickness} is strongly convection-dominated, the singular value decay is slow, and the reduced space $\V_r$ does not generalize well beyond the final training time $T=2050$.
To capture the influence of the parameter $z$ through the initial velocity at $t=2020$, we build a 4-dimensional reduced space $\mathbf{W} = [\mathbf{w}_1, \dots, \mathbf{w}_4] \in \R^{2n \times 4}$ for $\{\overline{\mathbf{v}}(2020; z_i)\}_{i=1}^{17}$ with \ac{pod}.
The space $\mathbf{W}$ captures \SI{99.843}{\%} of the velocity snapshot energy.
With these spaces, 
we build a parameterized \ac{rom} of the form
\begin{align}\label{eq:Greenland:rom}
    \dot{\widehat{\x}}_r (t; z) &= \widehat{\mathbf{m}}_r(t) + \opc_r(z) + \widehat{\mathbf{A}}_r(z)\widehat{\x}(t; z), &t > 2050
\end{align}
with initial condition $\widehat{\x}(2020; z) = \mathbf{0} \in \R^r$.
Here, we model the annual changes in the surface mass balance through
\begin{align}\label{eq:Greenland:smb}
    \widehat{\mathbf{m}}_r(t) = \sum_{j=2020}^{2049} \mathbbm{1}_{(j, j+1]}(t) \widehat{\mathbf{m}}_{r,j}
\end{align}
where, for $2015 \le j \le 2049$, $\mathbbm{1}_{(j, j+1]}(t)$ is the indicator function for the time interval $(j, j+1]$, and $\widehat{\mathbf{m}}_{r,j} \in \R^r$ is to be learned with \ac{opinf}.
We note that the characterization \eqref{eq:Greenland:smb} captures the annual nature of the atmospheric protocol \cite{nowicki2020experimental}, but does not specifically address altitude-dependent adjustments.
For the constant and linear operators $\opc_r(z)$ and $\opA_r(z)$, we impose affine decompositions of the form
\begin{align}\label{eq:Greenland:affine}
    \opc_r(z) &= \sum_{j=0}^{4} \theta_j(z) \opc_{r,j}, &
    \opA_r(z) &= \sum_{j=0}^{4} \theta_j(z) \opA_{r,j},
\end{align}
where $\{\opc_{r,j}\}_{j=0}^5 \subset \R^{r}$ and $\{\opA_{r,j}\}_{j=0}^5 \subset \R^{r \times r}$ are to be learned with \ac{opinf}.
We fix the first scaling coefficient $\theta_0(z) := 1$ independently of $z$, and choose, for $1 \le j \le 4$ and each training parameter $z_i$, $\theta_j(z_i) := \mathbf{w}_j\tr \overline{\mathbf{v}}(2020; z_i)$ as the projection of the 2020 training velocities $\overline{v}(2020; z_i)$ onto the $j$-th mode $\mathbf{w}_j$ of the reduced velocity space.
The representations \eqref{eq:Greenland:affine} are derived from \eqref{eq:Greenland:thickness} by (1) approximating the velocity first as a constant in time that only depends on the parameter $z$ and then further in the reduced space $\mathbf{W}$, (2) accounting for the centering of the ice thickness $\mathbf{h}(t; z)$ by $\mathbf{h}(2020; z)$; and (3) accounting for a small diffusion term that the \ac{fom} code introduces for numerical stabilization.
In total, we learn 35 vectors of dimension $r$ and five matrices of dimension $r \times r$ for our \ac{rom} \eqref{eq:Greenland:rom}, for a total of 550 degrees of freedom.
The corresponding data matrix $\Datamatrix_r \in \R^{6976 \times 185}$ is rank deficient with smallest positive singular value $\sigma_{\min>0}(\Datamatrix_r) = 1.19 \cdot 10^{-5}$ and largest singular value $\sigma_{\max}(\Datamatrix_r) = 124$.

We train our \ac{rom} using Algorithm \ref{alg:flower} with $n_{\omega} = 7$ candidate regularization weights, which --- following Section \ref{sec:block-regularization} --- we increase by factor 10 for entries learned in previous iterations.
Moreover, we call Algorithm \ref{alg:iterative} (with $i_{\max} = 1$) in line \ref{alg:flower:def:learn} as described in Section \ref{sec:iterative}.
The training time is \SI{18.4}{min} (not including the snapshot generation and the construction of the reduced space $\V_r$).
We run the \ac{rom} \eqref{eq:Greenland:rom} using implicit Euler time stepping with fixed $\Delta t = 0.01$, and, to evaluate errors to the full-order trajectory, interpolate the \ac{rom} solution onto the time discritization chosen by the \ac{fom}.
At any unseen parameter $z \in (z_i, z_{i+1})$, we interpolate linearly between the scaling coefficients $\theta_j(z_i)$ and $\theta_j(z_{i+1})$ of the neighboring two training parameters. 
The runtime of the \ac{rom} is \SI{0.06}{s} on average, for a minimum computational speed-up of $12,065\times$, and an average speed-up of $19,717\times$.

\begin{figure}
    \centering
    \includegraphics[width=\linewidth]{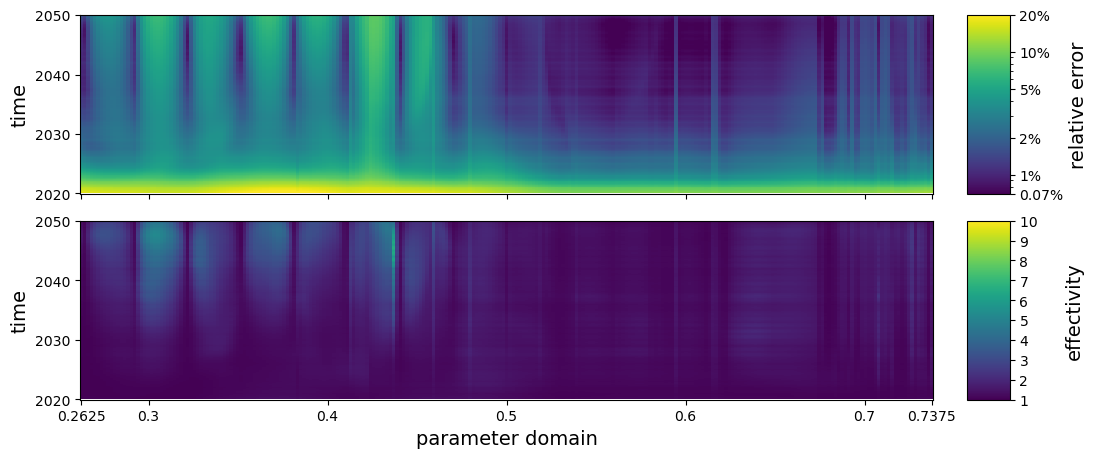}
    \caption{Relative error (top) and effectivity (bottom) in time (y-axis) over the parameter domain (x-axis).}
    \label{fig:greenland:2D}
\end{figure}

Figure \ref{fig:greenland:2D} (top) shows the distribution of the \ac{rom}'s relative error over the time and parameter domains.
Most notably, the error is worst around the initial time $t=2020$.
This behavior expected because we trained the reduced space with respect to the absolute error; close to $t=2020$, the snapshots' norms are smaller which causes them to be represented less in the reduced space.
This explanation is supported further by the near-optimal effectivity --- the ratio of the \ac{rom}'s error to the projection error --- between $t=2020$ and $t=2025$.
Beyond $t=2025$, the error stabilizes, increasing only slowly in time.
Despite the large error around $t=2020$, the time-averaged relative error lies between \SI{1.651}{\%} and \SI{3.288}{\%} for the training parameters, and between \SI{1.663}{\%} and \SI{7.542}{\%} for the test parameters, with \SI{2.365}{\%} and \SI{3.136}{\%} on average.
This generalization error from training to test parameters is reasonable because, with the effectivity below 10 for all parameters and all time steps, the \ac{rom} performs almost as good as the reduced space allows, despite the introduced model approximation error in the ice velocity and surface mass balance.


\section{Conclusion}\label{sec:conclusion}

This paper analyzes a nested \ac{opinf} approach for learning \acp{rom} from snapshot data and the structure of the full-order governing equations.
The nested \ac{opinf} algorithm iteratively expands the reduced space to account for the basis' intrinsic hierarchy.
It first learns \acp{rom} for the most important modes only, and then adjusts these \acp{rom} to the interactions of the less dominant modes for which the \ac{opinf} learning problem is inherently less stable.
For these later iterations, we show that the nested \ac{opinf} algorithm can increasingly choose stronger regularization without sacrificing training accuracy.
We prove that this nested strategy for regularizing the \ac{opinf} learning problem leads to at least as good an initial guess as Tikhonov regularization at the target reduced dimension, with equality only in the worst-case scenario.
Numerical tests involving a large-scale model of the Greenland demonstrated that nested \ac{opinf} can learn \acp{rom} even under strong model approximations.

The nested \ac{opinf} learning approach has versatile applications as it enables warm-starting within the \ac{opinf} training procedure:
It can be used to dynamically update a learned \ac{rom} to new snapshot data, focus on critical parts of the time or parameter domain during training, or to expand on structural model approximations.
In particular, in future work we will explore the use of nested \ac{opinf} within a greedy snapshot selection algorithm to reduce offline costs. 

\begin{appendix}

\section{Appendix}

\subsection{Regularized Operator Inference}
\label{sec:referencecode}

With Algorithm \ref{alg:opinf} we provide a brief reference code for solving the \ac{opinf} learning problem
\begin{align*}
    \min_{\widehat{\mathbf{O}} \in \R^{r \times r_{\rm{tot}}}}
    \| \Datamatrix_r \widehat{\mathbf{O}}\tr - \Rhsmatrix_r \|_F^2
    + \|\text{diag}(\mathbf{w})(\widehat{\mathbf{O}} - \widehat{\mathbf{O}}_{\rm{itr}})\tr\|_F^2
\end{align*}
with weight vector $\mathbf{w} \in \R^{r_{\rm{tot}}}$ and initial guesses $\opc_{\rm{itr}} \in \R^r, \opA_{\rm{itr}} \in \R^{r \times r}, \opH_{\rm{itr}} \in \R^{r \times r^{(2)}}$ stacked into the matrix $\widehat{\mathbf{O}}_{\rm{itr}} := [\opc_{\rm{itr}}, \opA_{\rm{itr}}, \opH_{\rm{itr}}] \in \R^{r \times r_{\rm{tot}}}$.

\begin{algorithm}
\caption{OpInf}\label{alg:opinf}
\begin{algorithmic}[1]
\smallskip
\State \textbf{Input:}
reduced dimension $r$,
data matrix $\Datamatrix_r \in \R^{K \times r_{\rm{tot}}}$, 
time derivative matrix $\Rhsmatrix_r\in \R^{K \times r}$, 
weight vector $\mathbf{w} \in \R_{>0}^{r_{\rm{tot}}}$
\State \textbf{Optional inputs:} initial guesses $\opc_{\rm{itr}} \in \R^r, \opA_{\rm{itr}} \in \R^{r \times r}, \opH_{\rm{itr}} \in \R^{r \times r^{(2)}}$ (default $\mathbf{0}$)
\State \textbf{Output:} Reduced operators $\opc_{r} \in \R^{r}$,
$\opA_{r} \in \R^{r \times r}$,
$\opH_{r} \in \R^{r \times r^{(2)}}$, smallest singular value $\sigma_{r{\rm{tot}}}$
\Statex \hrulefill
\smallskip
\Statex \texttt{\# Matrix preparation}
\State Compute residual $\mathbf{R}_{\Delta} \gets \mathbf{R}_r - \mathbf{D}_r [\opc_{r}, \opA_{r}, \opH_{r}]\tr$
\State Expand $\Datamatrix \gets [\Datamatrix_r\tr, \text{diag}(\mathbf{w})]\tr$, $\Rhsmatrix_{\Delta} \gets [\Rhsmatrix_{\Delta}\tr, \mathbf{0}_{r \times r_{\rm{tot}}}]\tr$
\smallskip
\Statex \texttt{\# Solve least squares problem}
\State Compute thin singular value decomposition $\Datamatrix = \mathbf{U} \Sigma \Psi\tr$ with $\mathbf{U} \in \R^{2K \times r_{\rm{tot}}}$ orthonormal, $\Sigma = \text{diag}(\sigma_1, \dots, \sigma_{r_{\rm{tot}}})$ ordered with $\sigma_1 \ge \dots \ge \sigma_{r_{\rm{tot}}} \ge \min(\mathbf{w}) > 0$, $\Psi \in \R^{r_{\rm{tot}} \times r_{\rm{tot}}}$
\State Compute $\widehat{\mathbf{O}}_{\Delta} \gets \Psi \Sigma^{-1} \mathbf{U}\tr \Rhsmatrix_{\Delta}$
\smallskip
\Statex \texttt{\# Operator update}
\State Split $\widehat{\mathbf{O}}_{\Delta}\tr = [\opc_{\Delta}\tr, \opA_{\Delta}\tr, \opH_{\Delta}\tr]$ into $\opc_{\Delta} \in \R^r, \opA_{\Delta} \in \R^{r \times r}, \opH_{\Delta} \in \R^{r \times s^{(r)}}$
\State Update $\opc_{r} \gets \opc_{\rm{itr}} +  \opc_{\Delta}$, 
$ \opA_{r} \gets \opA_{\rm{itr}} + \opA_{\Delta}$, 
$ \opH_{r} \gets \opH_{\rm{itr}} + \opH_{\Delta}$\\
\smallskip
\Return $\opc_{r}, \opA_{r}, \opH_{r}, \sigma_{r_{\rm{tot}}}$
\smallskip
\end{algorithmic}
\end{algorithm}

\subsection{Proofs}\label{sec:proofs}

\begin{corollary}\label{thm:opinf:conditioning}
    For any $1 \le r \le K$, let $\Datamatrix_r \in \R^{K \times r_{\rm{tot}}}$ be the data matrices for the \ac{opinf} learning problem as defined in \eqref{eq:opinf:matrices}.
    Then the minimum and maximum singular values are ordered as 
    \begin{align}
        \sigma_{\min}(\Datamatrix_{K}) \le \dots \le \sigma_{\min}(\Datamatrix_{2}) \le \sigma_{\min}(\Datamatrix_{1}) \le 
        \sigma_{\max}(\Datamatrix_{1}) \le \sigma_{\max}(\Datamatrix_{2}) \dots \le \sigma_{\max}(\Datamatrix_{K}).
    \end{align}
\end{corollary}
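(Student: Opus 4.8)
The plan is to reduce the statement to a single structural fact---that $\Datamatrix_{r-1}$ is, up to a reordering of its columns, a column-submatrix of $\Datamatrix_r$---and then apply the variational characterizations of the extreme singular values. This makes the interlacing immediate and avoids any explicit eigenvalue computation.

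First I would establish the nesting of columns. Since $\V_{r-1}$ consists of the first $r-1$ columns of $\V_r$, the first $r-1$ coordinates of each projected snapshot $\mathbf{p}_r(t_k) = \V_r\tr \x(t_k)$ coincide with $\mathbf{p}_{r-1}(t_k)$. Consequently, every column of $\Datamatrix_{r-1}$---the constant column, the linear columns $p_1, \dots, p_{r-1}$, and the condensed-Kronecker columns $p_i p_j$ with $1 \le i \le j \le r-1$---occurs verbatim among the columns of $\Datamatrix_r$, and the additional columns of $\Datamatrix_r$ are exactly those involving the new coordinate $p_r$. Hence there exist a permutation matrix $\mathbf{\Pi}$ and a block $\mathbf{B}_r$ of new columns such that $\Datamatrix_r \mathbf{\Pi} = [\Datamatrix_{r-1}, \mathbf{B}_r]$. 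Because right-multiplication by the orthogonal $\mathbf{\Pi}$ leaves every singular value unchanged, I may assume without loss of generality that $\Datamatrix_r = [\Datamatrix_{r-1}, \mathbf{B}_r]$.

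Next I would invoke $\sigma_{\max}(\mathbf{A}) = \max_{\|\mathbf{z}\|=1} \|\mathbf{A}\mathbf{z}\|$ and $\sigma_{\min}(\mathbf{A}) = \min_{\|\mathbf{z}\|=1} \|\mathbf{A}\mathbf{z}\|$. Using the zero-padding map that sends any $\mathbf{y}$ to the vector obtained by appending zeros in the coordinates corresponding to $\mathbf{B}_r$, which is norm-preserving and satisfies $\|[\Datamatrix_{r-1}, \mathbf{B}_r][\mathbf{y}\tr, \mathbf{0}\tr]\tr\| = \|\Datamatrix_{r-1}\mathbf{y}\|$, I obtain $\sigma_{\max}(\Datamatrix_{r-1}) \le \sigma_{\max}(\Datamatrix_r)$ (the maximum over the larger index set dominates) and $\sigma_{\min}(\Datamatrix_{r-1}) \ge \sigma_{\min}(\Datamatrix_r)$ (the minimum over the larger set is smaller). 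Chaining these two inequalities over $r = 2, 3, \dots, K$, and inserting the trivial $\sigma_{\min}(\Datamatrix_1) \le \sigma_{\max}(\Datamatrix_1)$ in the middle, yields the asserted ordering.

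The computations are routine; the only point requiring care is the convention for $\sigma_{\min}$. Since $r_{\rm{tot}} = 1 + r + r^{(2)}$ eventually exceeds $K$, the matrices become wide and rank-deficient, so I read $\sigma_{\min}(\Datamatrix_r)$ as $\min_{\|\mathbf{z}\|=1}\|\Datamatrix_r \mathbf{z}\|$ (which is then $0$), in keeping with the distinction between $\sigma_{\min}$ and $\sigma_{\min>0}$ drawn in the main text. With this reading the zero-padding argument applies uniformly, so no case split on the relative sizes of $K$ and $r_{\rm{tot}}$ is needed.
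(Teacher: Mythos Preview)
Your proposal is correct and follows the same approach as the paper: observe that $\Datamatrix_{r-1}$ is a column-submatrix of $\Datamatrix_r$ and invoke the resulting singular-value interlacing, then chain over $r$. The paper's proof is terser---it cites the interlacing property from a reference rather than deriving it via the variational characterization---but your more explicit zero-padding argument and your remark on the $\sigma_{\min}$ convention for wide matrices are welcome additions that make the proof self-contained.
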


\begin{proof}
    For any $1 \le r \le K-1$, $\Datamatrix_r$ is a submatrix of $\Datamatrix_{r+1}$, and thus $\sigma_{\min}(\Datamatrix_{r+1}) \le \sigma_{\min}(\Datamatrix_{r}) \le 
        \sigma_{\max}(\Datamatrix_{r}) \le \sigma_{\max}(\Datamatrix_{r+1})$ (e.g., \cite{aretz2024enforcing}).
    The result follows per induction.
\end{proof}

\begin{proof}[Proof to Corollary \ref{thm:comparisonzero}]
    We abbreviate $\alpha_s(t) := \basisvec_s\tr \x(t)$, and write the \ac{fom} solution $\x(t)$ in the form $\x(t) = \sum_{s=1}^{r} \alpha_s(t) \basisvec_s + \mathbf{q}(t)$ with $\mathbf{q}(t) \perp \V_r$ uniquely defined.
    First, we apply Proposition \ref{thm:expansionerror} to get
    \begin{align*}
        \sum_{k=1}^{K} \|\V_r \widehat{\x}_r^{(0)}(t_k) - \x(t_k) \|_2^2
        &= \sum_{k=1}^{K} \|\widehat{\x}_{r-1}(t_k) - \mathbf{p}_{r-1}(t_k) \|_2^2 \\
        &\qquad + \sum_{k=1}^{K} (\alpha_r(t_k) - \alpha(0))^2 + \sum_{k=1}^{K} \|\mathbf{q}(t_k) \|^2,
    \end{align*}
    where $\widehat{\x}_{r-1} : [0, t_K] \rightarrow \R^{r-1}$ solves \eqref{eq:rom:s} with reduced dimension $s=r-1$ and the operators $\opc_{r-1} = \opc_{r-1}^{(i^*)}$, $\opA_{r-1} = \opA_{r-1}^{(i^*)}$, $\opH_{r-1} = \opH_{r-1}^{(i^*)}$ obtained at the end of iteration $s = r-1$ of Algorithm \ref{alg:flower} with $i^*$ chosen in line \ref{alg:flower:def:istar}.
    Following Remark \ref{rmk:minerrorreduction},
    \begin{align}
        \sum_{k=1}^{K} \|\widehat{\x}_{r-1}(t_k) - \mathbf{p}_{r-1}(t_k) \|_2^2
        = \delta_{r-1}^* \le \delta_{r-1}^{(0)}
        = \sum_{k=1}^{K} \|\widehat{\x}_{r-1}^{(0)}(t_k) - \mathbf{p}_{r-1}(t_k) \|_2^2
    \end{align}
    regardless of $\overline{\delta}$, and with equality if and only if $\opc_{r-1}^{(0)} = \opc_{r-1}^{(i^*)}$, $\opA_{r-1}^{(0)} = \opA_{r-1}^{(i^*)}$, $\opH_{r-1}^{(0)} = \opH_{r-1}^{(i^*)}$. 
    Applying this argument recursively yields
    \begin{align*}
        &\sum_{k=1}^{K} \|\V_r\widehat{\x}_r^{(0)}(t_k) - \x(t_k) \|_2^2 \\
        &\le \sum_{k=1}^{K} \|\widehat{\x}_{1}^{(0)}(t_k) - \mathbf{p}_1(t_k) \|_2^2 + \sum_{s=2}^{r}\sum_{k=1}^{K} (\alpha_s(t_k) + \alpha_s(0))^2 + \sum_{k=1}^{K} \|\mathbf{q}(t_k) \|^2 \\
        &= \sum_{k=1}^{K} \left(\sum_{s=1}^{r} (\alpha_s(t_k) + \alpha_s(0))^2 + \|\mathbf{q}(t_k) \|^2\right)
    \end{align*}
    where we have used for the last equality that $\opc_1 = \opA_1 = \opH_1 = \mathbf{0}_{1 \times 1}$ and thus $\widehat{\x}_1^{(0)} \equiv \alpha_1(0)$.
    For the \ac{rom} solution $\overline{\x}_r$ with the trivial operators $\opc_r = \mathbf{0}_{r \times 1}$, $\opc_r = \mathbf{0}_{r \times r}$, $\opc_r = \mathbf{0}_{r \times r^{(2)}}$ we observe that 
    \begin{align}
        \overline{\x} \equiv \V_r\tr \x(0) = [\alpha_1(0), \dots, \alpha_r(0)]\tr
    \end{align}
    and thus 
    \begin{align}
        \|\V_r \overline{\x}_r(t_k) - \x(t_k)\|_2^2 = \sum_{s=1}^{r} (\alpha_s(t_k)-\alpha_s(0))^2 + \|\mathbf{q}(t_k)\|^2.
    \end{align}
    The result follows.
\end{proof}

\end{appendix}


\section*{Declarations}

\paragraph{Acknowledgements}
The authors would like to thank Serkan Gugercin and Shane McQuarrie for fruitful discussions.
We acknowledge the World Climate Research Programme, which, through its Working Group on Coupled Modelling, coordinated and promoted the Coupled Model Intercomparison Project Phase 6 (CMIP6, \cite{eyring2016overview}). We thank the climate modeling groups for producing and making available their model output, the Earth System Grid Federation (ESGF) for archiving the data and providing access, and the multiple funding agencies who support CMIP6 and ESGF.

\paragraph{Funding}
This work was supported in parts by the Department of Energy grant DE-SC002317, and the National Science Foundation grant \#2103942.

\paragraph{Conflict of Interest}
The authors have no conflicts of interest to declare that are relevant to the content of this article.

\paragraph{Author Contributions}
N. Aretz: Conceptualization; methodology; formal analysis and investigation; writing --- original draft preparation; writing --- review and editing.
K. Willcox: Conceptualization; methodology; writing --- original draft preparation; writing --- review and editing; funding acquisition; resources; supervision.

\paragraph{Data availability}
The Greenland example in Section \ref{sec:greenland} used the following data sets:
The BedMachine v5 dataset on the Greenland bedrock topography and ice thickness is available at \href{https://nsidc.org/data/idbmg4/versions/5}{nsidc.org/data/idbmg4/versions/5}.
The GIMP ice and ocean mask (v2.0) is available at
\href{https://byrd.osu.edu/research/groups/glacier-dynamics/data/icemask}{byrd.osu.edu/research/groups/glacier-dynamics/data/icemask}.
Its combination with the coastline by Jeremie Mouginot is available at 
\url{issm.jpl.nasa.gov/documentation/tutorials/datasets/}
(SeaRISE Greenland dev1.2).
The MEaSUREs Multi-year Greenland Ice Sheet Velocity Mosaic (version 1) is available at \href{https://nsidc.org/data/nsidc-0670/versions/1}{nsidc.org/data/nsidc-0670/versions/1}.
The MEaSUREs Greenland Annual Ice Sheet Velocity Mosaics from SAR and Landsat (version 4, ID NSIDC-0725) for the years 2015-2021 are available at \href{https://nsidc.org/data/nsidc-0725/versions/4}{nsidc.org/data/nsidc-0725/versions/4}.
The CNRM-CM6-1 atmospheric projections are available on \href{https://theghub.org/}{theghub.org/}.
The Greenland heat flux A20180227-001 is available at
\href{https://ads.nipr.ac.jp/data/meta/A20180227-001/}{ads.nipr.ac.jp/data/meta/A20180227-001/}.
The Greenland heat flux ``Shapiro-Ritzwoller" is available at \href{http://ciei.colorado.edu/~nshapiro/MODEL/}{ciei.colorado.edu/$\sim$nshapiro/MODEL/}.

\paragraph{Code availability}
The code used to generate the results in this paper will be made available at \hyperlink{https://github.com/nicolearetz/Nested-OpInf}{https://github.com/nicolearetz/Nested-OpInf}.


\bibliographystyle{plain}
\bibliography{bib}

\end{document}